    \author{Adil Salim \qquad  Peter Richt\'{a}rik   \\
	King Abdullah University of Science and Technology, Thuwal, Saudi Arabia
}
\newcommand{\cB}{{\mathcal B}} 
\newcommand{\cF}{{\mathcal F}}
\newcommand{\cP}{{\mathcal P}} 
\newcommand{\cO}{{\mathcal O}} 
\newcommand{\cE}{{\mathcal E}} 
\newcommand{\cH}{{\mathcal H}} 
\newcommand{\Lag}{{\mathscr L}} 
\newcommand{\DG}{{\mathscr D}} 
\newcommand{\mcF}{{\mathscr F}} 
\newcommand{\mcB}{{\mathscr B}} 
\newcommand{\mcG}{{\mathscr G}}
\DeclareMathOperator{\Leb}{Leb}
\DeclareMathOperator{\prox}{prox}
\DeclareMathOperator{\dom}{dom}
\DeclareMathOperator{\interior}{int}
\newcommand{\eqdef}{:=}
\newcommand{\bR}{{\mathbb R}} 
\newcommand{\bN}{{\mathbb N}} 
\newcommand{\bP}{{\mathbb P}} 
\newcommand{\indic}{{\mathbf 1}} 
\newcommand{\bE}{{\mathbb E}} 
\newcommand{\bV}{{\mathbb V}} 
\newcommand{\sX}{{\mathsf X}} 
\newcommand{\sZ}{{\mathsf Z}}
\newcommand{\partialb}{{\boldsymbol{\partial}}}
\newcommand{\cL}{{{\mathcal L}}}
\newcommand{\KL}{\mathop{\mathrm{KL}}\nolimits}
\newcommand{\tr}{\mathop{\mathrm{tr}}\nolimits}
\newcommand{\ps}[1]{\langle #1 \rangle}
\newcommand{\supp}{\mathop{\mathrm{supp}}\nolimits}
\theoremstyle{definition}
\newtheorem{theorem}{Theorem}
\newtheorem{lemma}[theorem]{Lemma}
\newtheorem{corollary}[theorem]{Corollary}
\newtheorem{remark}{Remark}
\newtheorem{assumption}{Assumption}
\DeclareMathOperator*{\argmin}{arg\,min}
\title{Primal Dual Interpretation of the Proximal Stochastic Gradient Langevin Algorithm}
\begin{document}

\maketitle

\begin{abstract}
We consider the task of sampling with respect to a log concave probability distribution. The potential of the target distribution is assumed to be composite, \textit{i.e.}, written as the sum of a smooth convex term, and a nonsmooth convex term possibly taking infinite values. The target distribution can be seen as a minimizer of the Kullback-Leibler divergence defined on the Wasserstein space (\textit{i.e.}, the space of probability measures). In the first part of this paper, we establish a strong duality result for this minimization problem. In the second part of this paper, we use the duality gap arising from the first part to study the complexity of the Proximal Stochastic Gradient Langevin Algorithm (PSGLA), which can be seen as a generalization of the Projected Langevin Algorithm. Our approach relies on viewing PSGLA as a primal dual algorithm and covers many cases where the target distribution is not fully supported. In particular, we show that if the potential is strongly convex, the complexity of PSGLA is $\cO(1/\varepsilon^2)$ in terms of the 2-Wasserstein distance. In contrast, the complexity of the Projected Langevin Algorithm is $\cO(1/\varepsilon^{12})$ in terms of total variation when the potential is convex.
\end{abstract}


\section{Introduction}
Sampling from a target distribution is a fundamental task in machine learning. Consider the Euclidean space $\sX = \bR^d$ and a convex function $V : \sX \to (-\infty,+\infty]$. Assuming that $\exp(-V)$ has a positive finite integral w.r.t.\ the Lebesgue measure $\Leb$, we consider the task of sampling from the distribution $\mu^\star$ whose density is proportional to $\exp(-V(x))$ (we shall write $\mu^\star \propto \exp(-V)$).

If $V$ is smooth, Langevin algorithm produces a sequence of iterates $(x^k)$ asymptotically distributed according to a distribution close to $\mu^\star$. Langevin algorithm performs iterations of the form
\begin{equation}
    \label{eq:Langevin-vanilla}
    x^{k+1} = x^k - \gamma \nabla V(x^k) + \sqrt{2\gamma} W^{k+1},
\end{equation}
where $\gamma >0$ and $(W^k)_k$ is a sequence of i.i.d.\ standard Gaussian vectors in $\sX$. Each iteration of~\eqref{eq:Langevin-vanilla} can be seen as a gradient descent step for $V$, where the gradient of $V$ is perturbed by a Gaussian vector. Hence, the iterations of Langevin algorithm look like those of the stochastic gradient algorithm; however the noise in Langevin algorithm is scaled by $\sqrt{\gamma}$ instead of $\gamma$. Nonasymptotic bounds for Langevin algorithm have been established in~\cite{dalalyan2017theoretical,durmus2017nonasymptotic}. Moreover, Langevin algorithm can be interpreted as an inexact gradient descent method to minimize the Kullback-Leibler (KL) divergence w.r.t.\ $\mu^\star$ in the space of probability measures~\cite{cheng2018convergence,durmus2018analysis,bernton2018langevin,wibisono2018sampling,ma2019there,ambrosio2008gradient}. 

In many applications, the function $V$ is naturally written as the sum of a smooth and a nonsmooth term. In Bayesian statistics for example, $\mu^\star$ typically represents some posterior distribution. In this case, $V$ is the sum of the $\log$-likelihood (which is itself a sum over the data points) and the possibly nonsmooth potential of the prior distribution~\cite{welling2011bayesian,durmus2018efficient,durmus2018analysis}, which plays the role of a regularizer. In some other applications in Bayesian learning, the support of $\mu^\star$ is not the whole space $\sX$~\cite{bubeck2018sampling,brosse2017sampling} (\textit{i.e.}, $V$ can take the value $+\infty$). In order to cover these applications, we consider the case where $V$ is written as
\begin{equation}
\label{eq:V}
V(x) \eqdef \bE_\xi(f(x,\xi)) + G(x),
\end{equation}
where $\xi$ is a random variable, $f(\cdot,s) : \sX \to \bR$ for every $s \in \Xi$, $F(x) = \bE_\xi(f(x,\xi))$ is smooth and convex and $G : \sX \to (-\infty,+\infty]$ is nonsmooth and convex. We assume to have access to the stochastic gradient $\nabla_x f(x,\xi)$ (where $\xi$ is a random variable with values in $\Xi$) and to the proximity operator $\prox_{\gamma G}$ of $G$. The template~\eqref{eq:V} covers many log concave densities\cite{Chaux2007,durmus2018efficient,durmus2018analysis,bubeck2018sampling}. In optimization, the minimization of $V$ can be efficiently tackled by the proximal stochastic gradient algorithm \cite{atc-for-mou-17}. Inspired by this optimization algorithm, the Proximal Stochastic Gradient Langevin Algorithm (PSGLA)~\cite{durmus2018analysis} is the method performing proximal stochastic gradient Langevin steps of the form
\begin{equation}
    \label{eq:psgla}
    x^{k+1} = \prox_{\gamma G}\left(x^k - \gamma \nabla_x f(x^k,\xi^{k+1}) + \sqrt{2\gamma} W^{k+1}\right),
\end{equation}
where $\gamma > 0$, $(W^k)$ is a sequence of i.i.d.\ standard Gaussian random vectors in $\sX$, and $(\xi^k)$ is a sequence of i.i.d.\ copies of $\xi$. Remarkably, the iterates $x^k$ of PSGLA remain in the domain of $G$, \textit{i.e.}, the support of $\mu^\star$, a property that is useful in many contexts. When $G$ is Lipschitz continuous, the support of $\mu^\star$ is $\sX$ and PSGLA can be interpreted as an inexact proximal gradient descent method for minimizing KL, with convergence rates  proven in terms of the KL divergence \cite{durmus2018analysis}. However, for  general $G$, the KL divergence can take infinite values along PSGLA. Therefore, a new approach is needed.  

 \subsection{Related works}
\label{sec:relat}
\paragraph{First, various instances of the PSGLA algorithm have already been considered.} The only instance allowing $G(x)$ to be infinite (\textit{i.e.}, the support of $\mu^\star$  not to be $\sX$) is the Projected Langevin Algorithm~\cite{bubeck2018sampling}, which corresponds to our setting in the special case with $G = \iota_C$ (\textit{i.e.}, the indicator function of a convex body\footnote{A convex body is a compact convex set with a nonempty interior.} $C$), and $\nabla f(\cdot,s) \equiv \nabla F$ for every $s$ (\textit{i.e.}, the full gradient of $F$). In this case, $\prox_{\gamma G}$ is the orthogonal projection onto $C$ and $\mu^\star$ is supported by $C$. Bubeck et al~\cite{bubeck2018sampling} provide complexity results in terms of sufficient number of iterations to achieve $\varepsilon$ accuracy in terms of the Total Variation between the target distribution $\mu^\star$ and the current iterate distribution. Assuming that $F$ is convex and smooth, the complexity of the Projected Langevin Algorithm is $\cO(1/\varepsilon^{12})$\footnote{Our big O notation ignores logarithm factors.}, and if $F \equiv 0$, the complexity is improved to $\cO(1/\varepsilon^8)$. 

Other instances of PSGLA were proposed in the case where $G$ is Lipschitz continuous or smooth (and hence finite). Wibisono~\cite{wibisono2018sampling} considered the case with $F = G$ and $\nabla f(\cdot,s) \equiv \nabla F$, proposing the  Symmetrized Langevin Algorithm (SLA),  and showed that the current iterate distribution converges linearly in Wasserstein distance to the invariant measure of the SLA, if $F$ is strongly convex and smooth. 
Durmus et al~\cite{durmus2018analysis} considered the case where $G$ is Lipschitz continuous, and showed that the complexity of PSGLA is $\cO(1/\varepsilon^2)$ in terms of the KL divergence and $\cO(1/\varepsilon^4)$ in terms of the Total Variation distance if $F$ is convex and smooth. If $F$ is strongly convex, the complexity is  $\cO(1/\varepsilon^2)$ in Wasserstein distance and $\cO(1/\varepsilon)$ in KL divergence. Bernton~\cite{bernton2018langevin} studied a setting similar to~\cite{durmus2018analysis} and derived a similar result for the Proximal Langevin Algorithm (\textit{i.e.}, PSGLA without the gradient step) in the strongly convex case. The Proximal Langevin Algorithm was also studied in a recent paper of Wibisono~\cite{wibisono2019proximal}, where a rapid convergence result was proven in the case where $G$ is nonconvex but satisfies further smoothness and geometric assumptions. 

\paragraph{Second, the task of sampling w.r.t.\ $\mu^\star$, where $G$ is nonsmooth and possibly takes infinite values, using Langevin algorithm, has also been considered.} When $F$ is strongly convex and $G$ an indicator function of a bounded convex set, the existence of an algorithm achieving $\cO(1/\varepsilon^2)$ in Wasserstein and Total Variation distances was proven by Hsieh et al~\cite[Theorem 3]{hsieh2018mirrored}. However, an actual algorithm is only given in a specific, although nonconvex, case. Besides, MYULA (Moreau-Yosida Unadjusted Langevin Algorithm)~\cite{durmus2018efficient,brosse2017sampling} can tackle the task of sampling from $\mu^\star$ efficiently. MYULA is equivalent to Langevin algorithm~\eqref{eq:Langevin-vanilla} applied to sampling from $\mu^\lambda \propto \exp(-F-G^\lambda)$, where $G^\lambda$ is the Moreau-Yosida approximation of $G$~\cite{atchade2015moreau}. By choosing the smoothing parameter $\lambda >0$ appropriately, and making assumptions that allow to control the distance between $\mu
^\lambda$ and $\mu^\star$ (\textit{e.g.}, $G$ Lipschitz or $G = \iota_C$), complexity results for MYULA were established in~\cite{durmus2018efficient,brosse2017sampling}. For example, if $G$ is the indicator function of a convex body, Brosse et al~\cite{brosse2017sampling} show that the complexity of MYULA is $\cO(1/\varepsilon^{6})$ in terms of the Total Variation distance (resp.\ 1-Wasserstein distance) if $F$ is convex and smooth (resp., if $F$ is strongly convex and smooth), provided that the algorithm is initialized from a minimizer of $V$. Similarly to PSGLA, MYULA involves one proximal step and one gradient step per iteration. However, the support of the smoothed distribution $\mu^\lambda$ is always $\sX$ (even if $\mu^\star$ is not fully supported), and therefore the iterates of MYULA do not remain in the support of the target distribution $\mu^\star$, contrary to PSGLA. 



\paragraph{Finally, the task of sampling w.r.t.\ $\mu^\star$, where $V$ is not smooth but finite, has also been considered.} The Perturbed Langevin Algorithm proposed by Chatterji et al~\cite{chatterji2019langevin} allows to sample from $\mu^\star$ in the case when $G$ satisfies a weak form of smoothness (generalizing both Lipschitz continuity and smoothness) and without accessing its proximity operator. Finally, if $G$ is Lipschitz continuous, the Stochastic Proximal Langevin Algorithm proposed by Salim et al~\cite{salim2019stochastic}  and Schechtman et al\cite{schechtman2019passty} allows to sample from $\mu^\star$ using  cheap stochastic proximity operators only.

\subsection{Contributions}

In summary, PSGLA has complexity $\cO(1/\varepsilon^{2})$ in 2-Wasserstein distance if $F$ is strongly convex~\cite{durmus2018analysis} and $G$ is Lipschitz. The only instance of PSGLA allowing $G$ to be infinite is the Projected Langevin Algorithm. It has complexity $\cO(1/\varepsilon^{12})$ in Total Variation~\cite{bubeck2018sampling}\footnote{This result also holds if $F$ is not strongly convex.} and only applies to the case where $G$ is the indicator of a convex body. In the latter case, another Langevin algorithm called MYULA has complexity $\cO(1/\varepsilon^{6})$ in 1-Wasserstein distance~\cite{brosse2017sampling}, but allows the iterates to leave the support of $\mu^\star$. Besides, still in the case where $G$ is an indicator function, there exists a Langevin algorithm achieving $\cO(1/\varepsilon^2)$ rate in the Wasserstein distance~\cite{hsieh2018mirrored}. 

{\em In this paper, we consider other (\textit{i.e.}, new) cases where $G$ can take infinite values. More precisely, we consider a general nonsmooth convex function $G$ and we assume that $\exp(-V)$ has a mild Sobolev regularity. We develop new mathematical tools (\textit{e.g.}, a Lagrangian for the minimization of KL), that have their own interest, to obtain our complexity results.  Our main result is to show that, surprisingly, PSGLA still has the complexity $\cO(1/\varepsilon^2)$ in 2-Wasserstein distance if $F$ is strongly convex, although $G$ can take infinite values. We also show that, if $F$ is just convex, PSGLA has the complexity $\cO(1/\varepsilon^2)$ in terms of a newly defined duality gap, which can be seen as the notion that replaces KL, since KL can be infinite.}


Our approach follows the line of works~\cite{cheng2018convergence,durmus2018analysis,wibisono2018sampling,bernton2018langevin,ma2019there,wibisono2019proximal,vempala2019rapid,rolland2020double} that formulate the task of sampling form $\mu^\star$ as the problem of minimizing the KL divergence w.r.t $\mu^\star$. 
In summary, our contributions are the following.

$\bullet$ In the first part of the paper, we  reformulate the task of sampling from $\mu^\star$ as the resolution of a monotone inclusion defined on the space of probability measures.  We subsequently  use this reformulation to  define a duality gap for the minimization of the KL divergence, and show that  strong duality holds.

$\bullet$ In the second part of this paper,  we use this reformulation to  represent PSGLA as a primal dual stochastic Forward Backward algorithm involving monotone operators.

$\bullet$ This new representation of PSGLA, along with the strong duality result from the first part, allows us to  prove new complexity results for PSGLA that extend and improve the state of the art.

$\bullet$ Finally, we conduct some numerical experiments for sampling from a distribution supported by a set of matrices (see appendix).

In the first part we combine tools from optimization duality~\cite{condatreview} and optimal transport~\cite{ambrosio2008gradient} and in the second part we combine tools from the analysis of the Langevin algorithm~\cite{durmus2018analysis}, and the analysis of primal dual optimization algorithms~\cite{drori2015simple,chambolle2016ergodic}.

The remainder is organized as follows. In Section~\ref{sec:background} we provide some background knowledge on convex analysis and optimal transport. In Section~\ref{sec:pd-opt} we develop a primal dual optimality theory for the task of sampling from $\mu^\star$. In Section~\ref{sec:alg} we give a new representation of PSGLA using monotone operators. We use it to state our main complexity result on PSGLA in Section~\ref{sec:res}. Numerical experiments and all proofs are postponed to the appendix. Therein, we also provide further intuitions on PSGLA, namely the connection between gradient descent and Langevin algorithm~\cite{durmus2018analysis} and the connection between primal dual optimization and our approach. Finally, an extension of PSGLA for handling a third (stochastic, Lipschitz continuous and proximable) term in the definition of the potential $V$~\eqref{eq:V} is provided at the end of the appendix.

\section{Background}
\label{sec:background}
Throughout this paper, we use the conventions $\exp(-\infty) = 0$ and $1/0 = +\infty$.
\subsection{Convex analysis}
In this section, we recall some facts from convex analysis. These facts will be used in the proofs without mention. For more details, the reader is referred to~\cite{bau17}.
\subsubsection{Convex optimization}
By  $\Gamma_0(\sX)$ we denote the set of proper, convex, lower semicontinuous functions $\sX \to (-\infty,+\infty]$. A function $F \in \Gamma_0(\sX)$ is $L$-smooth if $F$ is differentiable and its gradient $\nabla F$ is $L$-Lipschitz continuous. Consider $G \in \Gamma_0(\sX)$ and denote $\dom(G) \eqdef \{x \in \sX \;: \; G(x) < \infty\}$ its domain. Given $x \in \sX$, a subgradient of $G$ at $x$ is any vector $y \in \sX$ satisfying
\begin{equation}
    G(x) + \ps{y,x' - x} \leq G(x'),
\end{equation}
for every $x' \in \sX$. If the set $\partial G(x)$ of subgradients of $G$ at $x$ is not empty, then there exists a unique element of $\partial G(x)$ with minimal norm. This particular subgradient is denoted $\partial^0 G(x)$. The set valued map $\partial G(\cdot)$ is called the subdifferential. The proximity operator of $G$, denoted $\prox_{G}$, is defined by
\begin{equation}
    \label{eq:prox}
    \prox_{G}(x) \eqdef \argmin_{x' \in \sX} \left\{G(x') + \tfrac{1}{2}\|x-x'\|^2\right\}. 
\end{equation}
By $\iota_C(\cdot)$ we denote the indicator function of set $C$ given by $\iota_C(x) = 0$ if $x \in C$ and $\iota_C(x) =+\infty$ if $x \notin C$. If $G=\iota_C$, where $C$ is a closed convex set, then $\prox_{G}$ is the orthogonal projection onto $C$. Moreover, $\prox_{G}(x)$ is the only solution $x'$ to the inclusion $x \in x' + \partial G(x')$. The Fenchel transform of $G$ is the function $G^\ast \in \Gamma_0(\sX)$ defined by $G^\ast(y) \eqdef \sup_{x \in \sX} \left\{ \ps{y,x} - G(x) \right\}.$
Several properties relate $G$ to its Fenchel transform $G^\ast$. First, the Fenchel transform of $G^\ast$ is $G$. Then, the subdifferential $\partial G^\ast$ is characterized by the relation $x \in \partial G^\ast(y) \Leftrightarrow y \in \partial G(x).$ Finally, $G^\ast$ is $\lambda$-strongly convex if and only if $G$ is $1/\lambda$-smooth.

\subsubsection{Maximal monotone operators}
A set valued function $A : \sX \rightrightarrows \sX$ is {\em monotone} if $\ps{y-y',x-x'} \geq 0$ whenever $y \in A(x)$ and $y' \in A(x')$. The {\em inverse} of $A$, denoted $A^{-1}$, is defined by the relation $x \in A^{-1}(y) \Leftrightarrow y \in A(x)$, and the {\em set of zeros} of $A$ is $Z(A) \eqdef A^{-1}(0)$. If $A$ is monotone, $A$ is {\em maximal} if its {\em resolvent}, \textit{i.e.}, the map $J_{A} : x \mapsto (I+A)^{-1}(x)$, is  single valued. If $G \in \Gamma_0(\sX)$, then $\partial G$ is a maximal monotone operator and $J_{\partial G } = \prox_{G}$. Moreover, $Z(\partial G) = \argmin G$ and $(\partial G)^{-1} = \partial G^\ast$. If $S$ is a skew symmetric matrix on $\sX$, the operator $x \mapsto S x$ is maximal monotone. Finally, the sum $\partial G + S$ is also a maximal monotone operator. Many problems in optimization can be cast as the problem of finding a zero $x$ of the sum of two maximal monotone operators $0 \in (A+B)(x)$ \cite{condatreview}. For instance, $Z(\nabla F + \partial G) = \argmin F+G$. To solve this problem, the Forward Backward algorithm is given by the iteration $x^{k+1} = J_{P^{-1}A}(x^k - P^{-1}B(x^k))$, where $P$ is a symmetric positive definite matrix ($P \in \bR^{d \times d}_{++}$),\footnote{The operators $P^{-1}A$ and $P^{-1}B$ are not monotone in general, however they are monotone under the inner product induced by $P$.} and $B$ is single valued. Using the definition of the resolvent, the Forward Backward algorithm can equivalently be written as 
\begin{equation}
    P(x^{k+1/2} - x^k) = -\gamma B(x^k), \quad P(x^{k+1} - x^{k+1/2}) \in -\gamma A(x^{k+1}).
\end{equation}

\subsection{Optimal transport}
In this section, we recall some facts from optimal transport theory. These facts will be used in the proofs without mention. For more details, the reader is referred to Ambrosio et al~\cite{ambrosio2008gradient}.
\subsubsection{Wasserstein distance}
By $\mcB(\sX)$ we denote the $\sigma$-field of Lesbesgue measurable subsets of $\sX$, and by $\cP_2(\sX)$ the set of probability measures $\mu$ over $(\sX,\mcB(\sX))$ with finite second moment $\int \|x\|^2 d\mu(x) < \infty$. Denote $\supp(\mu)$ the support of $\mu$. The identity map $I$ belongs to the Hilbert space $L^2(\mu;\sX)$ of $\mu$-square integrable random vectors in $\sX$. We denote $\ps{\cdot,\cdot}_\mu$ (resp.\ $\|\cdot\|_{\mu}$) the inner product (resp.\ the norm) in this space.
Given $T : \sX \to \sZ$, where $\sZ$ is some Euclidean space, the {\em pushforward measure} of $\mu$ by $T$, also called the {\em image measure}, is defined by $T \# \mu (A) \eqdef \mu(T^{-1}(A))$ for every $A \in \mcB(\sZ)$. Consider $\mu,\nu \in \cP_2(\sX)$. A {\em coupling} $\upsilon$ between $\mu$ and $\nu$ (we shall write $\upsilon \in \Gamma(\mu,\nu)$) is a probability measure over $(\sX^2,\mcB(\sX^2))$ such that $x^\star \# \upsilon = \mu$, where $x^\star : (x,y) \mapsto x$, and $y^\star \# \upsilon = \nu$, where $y^\star : (x,y) \mapsto y$. In other words, $(X,Y)$ is a random variable such that the distribution of $X$ is $\mu$ (we shall write $X \sim \mu$) and $Y \sim \nu$ if and only if the distribution of $(X,Y)$ is a coupling. The (2-){\em Wasserstein distance} is then defined by
\begin{equation}
\label{eq:wass}
    W^2(\mu,\nu) \eqdef \inf_{\upsilon \in \Gamma(\mu,\nu)} \int \|x-y\|^2 d\upsilon(x,y).
\end{equation}
Let $\cP_2^r(\sX)$ be the set of elements $\mu \in \cP_2(\sX)$ such that $\mu$ is absolutely continuous w.r.t.\ $\Leb$ (we shall write $\mu \ll \Leb$). Brenier's theorem asserts that if $\mu \in \cP_2^r(\sX)$, then the $\inf$ defining $W^2(\mu,\nu)$ is actually a $\min$ achieved by a unique minimizer $\upsilon$. Moreover, there exists a uniquely determined $\mu$-almost everywhere (a.e.) map $T_{\mu}^{\nu} : \sX \to \sX$ such that $\upsilon = (I,T_{\mu}^{\nu}) \# \mu$, where $(I,T_{\mu}^{\nu}) : x \mapsto (x,T_{\mu}^{\nu}(x))$. In this case, $T_{\mu}^{\nu}$ is called the {\em optimal pushforward} from $\mu$ to $\nu$ and satisfies 
\begin{equation}
    W^2(\mu,\nu) = \int \|x-T_{\mu}^{\nu}(x)\|^2 d\mu(x).
\end{equation} 

\subsubsection{Geodesically convex functionals}

We shall consider several functionals defined on the space $\cP_2(\sX)$. For every $\mu \in \cP_2^r(\sX)$ with density denoted $\mu(x)$ w.r.t.\ $\Leb$, the {\em entropy} is defined by
\begin{equation}
\label{eq:entropy}
    \cH(\mu) \eqdef \int \log(\mu(x)) d\mu(x),
\end{equation}
and if $\mu \notin \cP_2^r(\sX)$, then $\cH(\mu) \eqdef +\infty$.
Given $V \in \Gamma_0(\sX)$, the {\em potential energy} is defined for every $\mu \in \cP_2(\sX)$ by 
\begin{equation}
\label{eq:potential}
    \cE_V(\mu) \eqdef \int V(x)d\mu(x).
\end{equation}
Finally, if $\mu' \in \cP_2(\sX)$ such that $\mu \ll \mu'$, the {\em Kullback-Leibler (KL) divergence} is defined by
\begin{equation}
\label{eq:kl}
    \KL(\mu|\mu') \eqdef \int \log\left(\tfrac{d\mu}{d\mu'}(x)\right) d\mu(x),
\end{equation}
where $\frac{d\mu}{d\mu'}$ denotes the density of $\mu$ w.r.t.\ $\mu'$, and $\KL(\mu|\mu') \eqdef +\infty$ if $\mu$ is not absolutely continuous w.r.t.\ $\mu'$.
The functionals $\cH$, $\cE_V$ and $\KL(\cdot|\mu^\star)$ satisfy a form of convexity over $\cP_2(\sX)$ called {\em geodesic convexity}. If $\cF : \cP_2(\sX) \to (-\infty,+\infty]$ is geodesically convex, then for every $\mu \in \cP_2^r(\sX)$, $\mu'\in\cP_2(\sX)$, and $\alpha \in [0,1]$, $\cF\left((\alpha T_{\mu}^{\mu'} + (1-\alpha) I)\#\mu\right) \leq \alpha \cF(\mu') +(1-\alpha)\cF(\mu).$
Given $\mu \in \cP_2^r(\sX)$, a {\em (Wasserstein) subgradient} of $\cF$ at $\mu$ is a random variable $Y \in L^2(\mu;\sX)$ such that for every $\mu' \in \cP_2(\sX)$, 
\begin{equation}
    \label{eq:subdiff}
    \cF(\mu) + \ps{Y,T_{\mu}^{\mu'} - I}_{\mu} \leq \cF(\mu').
\end{equation}
Moreover, if $Y'$ is a subgradient of $\cF$ at $\mu'$, then the following monotonicity property holds
\begin{equation}
    \label{eq:wass-monot}
    \ps{Y'\circ T_{\mu}^{\mu'} - Y,T_{\mu}^{\mu'} - I}_{\mu} \geq 0.
\end{equation}
If the set $\partialb \cF(\mu) \subset L^2(\mu;\sX)$ of subgradients of $\cF$ at $\mu$ is not empty, then there exists a unique element of $\partialb \cF(\mu)$ with minimal norm. This particular subgradient is denoted $\partialb^0 \cF(\mu)$. However, the set $\partialb \cF(\mu)$ might be empty. A typical condition for nonemptiness requires the density $\mu(x)$ to have some Sobolev regularity. For every open set $\Omega \subset \sX$, we denote $S^{1,1}(\Omega)$ the Sobolev space of $\Leb$-integrable functions $u : \Omega \to \bR$ admitting a $\Leb$-integrable weak gradient $\nabla u : \Omega \to \sX$. We say that $u \in S_{\text{loc}}^{1,1}(\Omega)$ if $u \in S^{1,1}(K)$ for every bounded open set $K \subset \Omega$. Obviously, $S^{1,1}(\Omega) \subset S_{\text{loc}}^{1,1}(\Omega)$.

\subsection{Assumptions on $F$ and $G$}
Consider $F : \sX \to \bR$ and $G : \sX \to (-\infty,+\infty]$. We make the following assumptions.
\begin{assumption}
\label{as:smooth}
The function $F$ is convex and $L$-smooth. Moreover, $G \in \Gamma_0(\sX)$. 
\end{assumption}
Note that $V \eqdef F+G \in \Gamma_0(\sX)$. We denote $\lambda_F$ (resp.\ $\lambda_{G^\ast}$) the strong convexity parameter of $F$ (resp.\ $G^\ast$), equal to zero if $F$ (resp.\ $G^\ast$) is not strongly convex.
\begin{assumption}
\label{as:int}
The integral $\int \exp(-V)d\Leb$ is positive and finite.
\end{assumption}
Assumption~\ref{as:int} is needed to define the target distribution $\mu^\star \propto \exp(-V)$, and implies that $\interior(D) \neq \emptyset$, where $D \eqdef \dom(V)$.
\begin{lemma}
\label{lem:Vint}
If Assumptions~\ref{as:smooth} and~\ref{as:int} hold, then
$$\int |V(x)|\exp(-V(x))dx < \infty,\quad \text{and} \quad \int \|x\|^2\exp(-V(x))dx < \infty.$$
\end{lemma}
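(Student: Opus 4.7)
The plan is to reduce both statements to a single a priori bound: that the potential $V$ grows at least linearly, in the sense that there exist constants $a > 0$ and $b \in \bR$ with $V(x) \geq a\|x\| - b$ for every $x \in \sX$. This is the heart of the argument, and once it is in hand both integrability claims follow from elementary tail estimates. Geometrically, the bound expresses the well-known fact that a log-concave probability density on $\bR^d$ has exponentially decaying tails.

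To establish this linear lower bound I would first note the Markov-type inequality $\Leb(\{V \leq M\}) \leq e^M \int \exp(-V)\,d\Leb$, which by Assumption~\ref{as:int} is finite for every $M \in \bR$. Since any convex subset of $\bR^d$ of finite Lebesgue measure is bounded, each sublevel set $\{V \leq M\}$ is bounded. I would then upgrade this to linear growth by a convexity-scaling argument: choose $x_0 \in \interior(D)$ (provided by Assumption~\ref{as:int}) and use continuity of $V$ on $\interior(D)$ to see that the sublevel set $\{V \leq V(x_0)+1\}$ contains a neighborhood of $x_0$ and is thus bounded of some radius $R_0 < \infty$. For any $M \geq V(x_0)+1$ and any $x$ with $V(x) \leq M$, set $\alpha \eqdef 1/(M - V(x_0)) \in (0,1]$; convexity of $V$ yields $V(\alpha x + (1-\alpha) x_0) \leq \alpha M + (1-\alpha) V(x_0) = V(x_0) + 1$, hence $\|\alpha x + (1-\alpha) x_0\| \leq R_0$. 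Rearranging gives $\|x\| \leq c_1 M + c_2$ for explicit constants $c_1, c_2$, and by contraposition $V(x) \geq a\|x\| - b$ outside a bounded set, with $a \eqdef 1/c_1 > 0$. The remaining bounded region is absorbed into $b$ via the fact that $V$ is bounded below on compact sets by lower semicontinuity and properness.

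With the linear lower bound at hand, both integrals become routine. From $\exp(-V(x)) \leq e^b \exp(-a\|x\|)$ we immediately get $\int \|x\|^2 \exp(-V)\,d\Leb \leq e^b \int \|x\|^2 \exp(-a\|x\|)\,d\Leb < \infty$. For $\int |V| \exp(-V)\,d\Leb$, observe that $V \geq -b$, so $V^-$ is uniformly bounded and $\{V < 0\}$ has finite Lebesgue measure, making $\int V^- \exp(-V)\,d\Leb$ finite. On $\{V \geq 0\}$ the elementary pointwise inequality $s e^{-s} \leq (2/e)\, e^{-s/2}$ for $s \geq 0$ yields $V^+ \exp(-V) \leq (2/e) \exp(-V/2) \leq (2/e) e^{b/2} \exp(-a\|x\|/2)$, which is integrable on $\bR^d$.

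The main obstacle is the linear lower bound itself. Boundedness of the sublevel sets is a one-line consequence of Assumption~\ref{as:int} and does not even need convexity; but the \emph{quantitative} claim that the radius of $\{V \leq M\}$ grows at most linearly in $M$ is where convexity of $V$ plays a decisive role. Without it, the sublevel sets could a priori expand much faster than linearly, yielding only a sublinear lower bound on $V$—insufficient to control $\int \|x\|^2 \exp(-V)$ by the simple exponential-tail estimate used above.
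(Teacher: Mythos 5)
Your proof is correct and follows essentially the same route as the paper: both arguments reduce everything to the exponential tail bound $\exp(-V(x)) \leq A\exp(-B\|x\|)$ (equivalently $V(x)\geq a\|x\|-b$), which the paper imports from Brazitikos et al.\ (Lemma 2.2.1) while you derive it directly via the sublevel-set scaling argument, after which both conclude with elementary estimates (the paper splits according to whether $V(x)\leq \|x\|^2+1$ and uses monotonicity of $u e^{-u}$, where you use $s e^{-s}\leq (2/e)e^{-s/2}$). One small imprecision: a convex subset of $\bR^d$ of finite Lebesgue measure need not be bounded (a ray is a counterexample), so the claim should be stated for convex sets with nonempty interior --- which is exactly the situation for the sublevel sets you actually use, since they contain a ball around $x_0$.
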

Lemma~\ref{lem:Vint} implies that $\mu^\star \in \cP_2(\sX)$ and using Assumption~\ref{as:smooth}, $\|\nabla F\| \in L^2(\mu^\star;\bR)$. Since $G \in \Gamma_0(\sX)$, $G$ is differentiable $\Leb$-a.e.\ (almost everywhere) on $\interior(D)$, see~\cite[Theorem 25.5]{rockafellar1970convex}.
\begin{assumption}
\label{as:intgrad}
 The integral $\int_{\interior(D)} \|\nabla G\|^2\exp(-V)d\Leb$ is finite.
\end{assumption}
Assumption~\ref{as:intgrad} is equivalent to requiring $\|\nabla G\| \in L^2(\mu^\star; \bR)$, see below.
Moreover, we assume the following regularity property for the function $\exp(-V)$. 
\begin{assumption}
\label{as:sobolev}
The function $\exp(-V)$ belongs to the space $S_{\text{loc}}^{1,1}(\sX)$.
\end{assumption}
Assumption~\ref{as:sobolev} is a necessary condition for $\partialb \cH(\mu^\star) \neq \emptyset$, see below. This assumption precludes $\mu^\star$ from being a uniform distribution. However, Assumption~\ref{as:sobolev} is quite general, \textit{e.g.}, $\exp(-V)$ need not be continuous or positive (see the numerical experiment section). Finally, we assume that the stochastic gradients of $F$ have a bounded variance. Consider an abstract measurable space $(\Xi,\mcG)$, and a random variable $\xi$ with values in $(\Xi,\mcG)$.
\begin{assumption}
\label{as:bound-var}
For every $x \in \sX$, $f(x,\xi)$ is integrable and $F(x) = \bE_\xi(f(x,\xi))$. Moreover, there exists $\sigma_F \geq 0$ such that for every $x \in \sX$, $\bV_\xi(\|\nabla f(x,\xi)\|) \leq \sigma_F^2$, where $\bV$ denotes the variance.
\end{assumption}
The last assumption implies that the stochastic gradients are unbiased: $\bE_\xi(\nabla f(x,\xi)) = \nabla F(x)$ for every $x \in \sX$.

\section{Primal dual optimality in Wasserstein space}
\label{sec:pd-opt}
Let $\cF : \cP_2(\sX) \to (-\infty,+\infty]$ be defined by
\begin{equation}
\label{eq:def-cF}
\cF(\mu) \eqdef \cH(\mu) + \cE_V(\mu) = \cH(\mu) + \cE_F(\mu) + \cE_G(\mu).
\end{equation}
Using Lemma~\ref{lem:Vint}, $\cH(\mu^\star)$ and $\cE_V(\mu^\star)$ are finite real numbers. Moreover, using~\cite[Lemma 1.b]{durmus2018analysis}, for every $\mu \in \cP_2(\sX)$ such that $\cE_V(\mu) < \infty$, we have the identity 
\begin{equation}
\label{eq:KL-F}
    \cF(\mu) - \cF(\mu^\star) = \KL(\mu|\mu^\star).
\end{equation}
Equation~\eqref{eq:KL-F} says that $\mu^\star$ is the unique minimizer of $\cF$: $\mu^\star = \argmin \cF.$
\subsection{Subdifferential calculus}
The following result is a consequence of~\cite[Theorem 10.4.13]{ambrosio2008gradient}.

\begin{theorem}
\label{th:subdiff}
Let $\mu \propto \rho$ be an element of $\dom(\cF)$. Then, $\supp(\mu) \subset \overline{D}$ and $\mu(\overline{D}\setminus \interior(D)) = 0$. Moreover, $\partialb \cF(\mu) \neq \emptyset$ if and only if $\rho \in S_{\text{loc}}^{1,1}(\interior(D))$ and there exists $w \in L^2(\mu)$ such that 
\begin{equation}
    w(x) \rho(x) = \nabla \rho(x) + \rho(x) \nabla V(x),
\end{equation}
for $\mu$-a.e.\ $x$. In this case, $w = \partialb^0 \cF(\mu)$.
\end{theorem}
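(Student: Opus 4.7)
The plan is to reduce the statement to Theorem 10.4.13 of Ambrosio-Gigli-Savaré, which provides the subdifferential calculus for the sum of internal and potential energies in the Wasserstein space. Before doing so, I would dispense with the support and boundary claims, which follow from pure integrability considerations. Since $\mu \in \dom(\cF)$, both $\cH(\mu)$ and $\cE_V(\mu)$ are finite; finiteness of $\cH(\mu)$ forces $\mu \ll \Leb$, while finiteness of $\cE_V(\mu) = \int V\,d\mu$ forces $\mu(\{V = +\infty\}) = \mu(\sX \setminus D) = 0$, whence $\supp(\mu) \subset \overline{D}$. Since Assumption~\ref{as:int} guarantees that the convex set $D$ has nonempty interior, its topological boundary $\overline{D} \setminus \interior(D)$ has Lebesgue measure zero, and therefore $\mu(\overline{D} \setminus \interior(D)) = 0$ by absolute continuity. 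In particular, $\mu$ is concentrated on $\interior(D)$.

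For the subdifferential characterization, I would apply the Ambrosio theorem to the decomposition $\cF = \cH + \cE_V$. The Wasserstein gradient of $\cH$ at $\mu$ is formally $\nabla \rho/\rho$, which requires a weak gradient $\nabla\rho$ and hence the Sobolev regularity $\rho \in S^{1,1}_{\text{loc}}(\interior(D))$. The Wasserstein gradient of $\cE_V$ at $\mu$ is $\nabla V$, interpreted as the $\mu$-a.e.\ gradient; this is well defined because $V \in \Gamma_0(\sX)$ is differentiable $\Leb$-a.e.\ on $\interior(D)$ by Theorem 25.5 of Rockafellar, and $\mu \ll \Leb$. Adding these contributions gives the candidate subgradient $w = \nabla\rho/\rho + \nabla V$ on $\{\rho > 0\}$, or equivalently the identity $w\rho = \nabla\rho + \rho\nabla V$ $\mu$-a.e. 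The Ambrosio theorem then asserts that $\partialb\cF(\mu)$ is nonempty precisely when this candidate belongs to $L^2(\mu;\sX)$, in which case it coincides with the minimal-norm selection $\partialb^0\cF(\mu)$.

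The main obstacle is justifying the application of the Ambrosio theorem in our setting, where $V$ is nonsmooth and possibly infinite-valued, whereas the textbook statement typically assumes a smooth, real-valued potential. The key remark is that since $\mu$ is concentrated on $\interior(D)$ (established in the first paragraph), all the relevant computations—both the first variation of $\cH$ and that of $\cE_V$—can be localized to $\interior(D)$, where $V$ is finite, convex, and $\Leb$-a.e.\ differentiable, so the classical formulas apply verbatim. A secondary but routine point is the identification of the candidate $w$ as the minimal-norm subgradient; this will come from the monotonicity property~\eqref{eq:wass-monot} of Wasserstein subgradients, which characterizes $\partialb^0 \cF(\mu)$ as the projection of the origin onto the closed convex set $\partialb\cF(\mu)$ in $L^2(\mu;\sX)$, together with the observation that the gradient-type expression $\nabla\rho/\rho + \nabla V$ is the canonical element singled out by Ambrosio's construction.
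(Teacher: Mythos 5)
Your proposal is correct and follows the same route as the paper, which simply invokes \cite[Theorem 10.4.13]{ambrosio2008gradient} without further argument; your derivation of the support and boundary claims from finiteness of $\cH(\mu)$ and $\cE_V(\mu)$ and from the Lebesgue-negligibility of the boundary of a convex set with nonempty interior is exactly the intended reduction. The only remark worth making is that the ``main obstacle'' you identify is not actually one: Theorem 10.4.13 in Ambrosio--Gigli--Savar\'e is stated precisely for lower semicontinuous, $\lambda$-convex potentials $V$ allowed to take the value $+\infty$ with $\interior(\dom V)\neq\emptyset$, so no localization argument beyond what the cited theorem already performs is needed.
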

If Assumptions~\ref{as:smooth} and~\ref{as:int} hold, then $\cF(\mu^\star) < \infty$ using Lemma~\ref{lem:Vint}. Then, Theorem~\ref{th:subdiff} implies that $\mu^\star(\interior(D)) = 1$. Therefore, using~\cite[Theorem 25.5]{rockafellar1970convex}, $G$ and $V$ are $\mu^\star$-a.s.\ differentiable.

Moreover, applying Theorem~\ref{th:subdiff} with $V \equiv 0$, we can replace $\cF$ by $\cH$ and $D$ by $\sX$. We obtain that $\partialb \cH(\mu) \neq \emptyset$ if and only if $\rho \in S_{\text{loc}}^{1,1}(\sX)$ and $w \rho = \nabla \rho$ for some $w \in L^2(\mu;\sX)$. Now, we set $\mu = \mu^\star$ and $\rho = \exp(-V)$. Using Assumption~\ref{as:sobolev} and $w = -\nabla V$, we obtain that $\partialb^0 \cH(\mu^\star) = -\nabla V$ $\mu^\star$-a.e. Therefore, using that $\nabla G$ is well defined $\mu^\star$-a.e., $\mu^\star$ satisfies
\begin{equation}
\label{eq:FOC}
    0 = \nabla F(x) + \partialb^0 \cH(\mu^\star)(x) + \nabla G(x), \text{ for } \mu^\star-\text{a.e. } x.
\end{equation}
Equation~\eqref{eq:FOC} can be seen as the first order optimality conditions associated with the minimization of the functional $\cF$. Consider the "dual" variable $Y^\star : x \mapsto \nabla G(x)$ defined $\mu^\star$ a.e. Using Assumption~\ref{as:intgrad} and $\mu^\star(\interior(D)) = 1$, $Y^\star \in L^2(\mu^\star;\sX)$. We can express the first order optimality condition~\eqref{eq:FOC} as $0 = \nabla F(x) + \partialb^0 \cH(\mu^\star)(x) + Y^\star(x)$, $\mu^\star$ a.e. Besides, $Y^\star(x) \in \partial G(x)$, therefore $0 \in - x + \partial G^\ast(Y^\star(x))$ using $\partial G^\ast = (\partial G)^{-1}.$ Denote $\nu^\star \eqdef Y^\star \# \mu^\star \in \cP_2(\sX)$ and $\pi^\star \eqdef (I,Y^\star) \# \mu^\star \in \cP_2(\sX^2)$. The relationship between $\mu^\star$ and $Y^\star$ can be summarized as
\begin{equation}
\label{eq:saddleW2}
\begin{bmatrix} 0 \\ 0\end{bmatrix} \in \begin{bmatrix} \nabla F(x) + \partialb^0 \cH(\mu^\star)(x) & + y \\ - x& + \partial G^\ast(y)\end{bmatrix} \text{ for } \pi^\star \text{ a.e. } (x,y).
\end{equation}


In the sequel, we fix the probability space $(\Omega,\mcF,\bP) = (\sX^2,\cB(\sX^2),\pi^\star)$, denote $\bE$ the mathematical expectation and $L^2$ the space $L^2(\Omega,\mcF,\bP;\sX)$. The expression "almost surely" (a.s.) will be understood w.r.t.\ $\bP$. Recall that $x^\star$ is the map $(x,y) \mapsto x$ and $y^\star : (x,y) \mapsto y$. Using Assumption~\ref{as:intgrad}, $x^\star, y^\star \in L^2$, $x^\star \sim \mu^\star$, $y^\star \sim \nu^\star$, $(x^\star,y^\star) \sim \pi^\star$ and $y^\star = \nabla G(x^\star)$ a.s. 

\subsection{Lagrangian function and duality gap}

We introduce the following Lagrangian function defined for every $\mu \in \cP_2(\sX)$ and $y \in L^2$ by
\begin{equation}
    \label{eq:LagrangianW}
    \Lag(\mu,y) \eqdef \cE_F(\mu) + \cH(\mu) - \cE_{G^\ast}(\nu) + \bE \ps{x, y},
\end{equation}
where $x = T_{\mu^\star}^\mu(x^\star)$. This Lagrangian is similar to the one used in Euclidean optimization; see the appendix. We also define the duality gap by
\begin{equation}
    \label{eq:dualitygap}
    \DG(\mu,y) \eqdef \Lag(\mu,y^\star) - \Lag(\mu^\star,y).
\end{equation}
The next theorem, which is of independent interest, can be interpreted as a strong duality result for the Lagrangian function $\Lag$, see~\cite[Lemma 36.2]{rockafellar1970convex}.
\begin{theorem}[Strong duality]
\label{lem:DGbregman}
Let Assumptions~\ref{as:smooth}--\ref{as:sobolev} hold true. Then, for every $\mu \in \cP_2(\sX), y \in L^2$, $\DG(\mu,y) \geq 0$ and $\Lag(\mu,y) \leq \cF(\mu)$. Moreover, $(\mu^\star,y^\star)$ is a saddle point of $\Lag$ with saddle value $\cF(\mu^\star)$, \textit{i.e.},
\begin{equation}
\label{eq:minmax}
\Lag(\mu^\star, y) \leq \cF(\mu^\star) = \Lag(\mu^\star,y^\star) \leq \Lag(\mu,y^\star).
\end{equation}
Finally, $\Lag(\mu^\star, y) = \cF(\mu^\star)$ if and only if $y = y^\star$, and, if $F$ is strictly convex, $\cF(\mu^\star) = \Lag(\mu,y^\star)$ if and only if $\mu = \mu^\star$.
\end{theorem}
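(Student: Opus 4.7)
The plan is to establish the weak inequality $\Lag(\mu,y)\leq \cF(\mu)$ first, then promote the pair $(\mu^\star,y^\star)$ to a saddle via two directed comparisons, and finally derive $\DG\geq 0$ and the equality cases as corollaries. Observe that because $T_{\mu^\star}^\mu$ is an optimal pushforward, the random variable $x=T_{\mu^\star}^\mu(x^\star)$ has distribution $\mu$, while $y$ has distribution $\nu$; consequently $\bE G(x)=\cE_G(\mu)$ and $\bE G^\ast(y)=\cE_{G^\ast}(\nu)$ (all terms are meaningful under Assumptions~\ref{as:smooth}--\ref{as:intgrad} together with Lemma~\ref{lem:Vint}). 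The Fenchel--Young inequality $\ps{x,y}\leq G(x)+G^\ast(y)$ integrated against $\bP$ immediately yields
\begin{equation*}
    \bE\ps{x,y}-\cE_{G^\ast}(\nu)\leq \cE_G(\mu),
\end{equation*}
so that $\Lag(\mu,y)\leq \cF(\mu)$. Specialised at $y=y^\star=\nabla G(x^\star)$, the pointwise Fenchel--Young inequality becomes an \emph{equality} almost surely, because $y^\star\in\partial G(x^\star)$ a.s.\ by construction; specialised at $\mu=\mu^\star$ (so that $x=x^\star$ a.s.), this gives $\Lag(\mu^\star,y^\star)=\cF(\mu^\star)$ and in particular $\Lag(\mu^\star,y)\leq \cF(\mu^\star)=\Lag(\mu^\star,y^\star)$, which is the right-hand inequality in~\eqref{eq:minmax}.

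The harder half is $\Lag(\mu^\star,y^\star)\leq \Lag(\mu,y^\star)$, and this is where the Wasserstein-subdifferential machinery and the first-order optimality condition~\eqref{eq:FOC} enter. The plan is to write
\begin{equation*}
    \Lag(\mu,y^\star)-\Lag(\mu^\star,y^\star)=\bigl[\cE_F(\mu)-\cE_F(\mu^\star)\bigr]+\bigl[\cH(\mu)-\cH(\mu^\star)\bigr]+\bE\ps{T_{\mu^\star}^\mu(x^\star)-x^\star,\,y^\star},
\end{equation*}
and to lower-bound the first two brackets using the geodesic-convexity subgradient inequality~\eqref{eq:subdiff} at $\mu^\star$ for $\cE_F$ (whose subgradient is $\nabla F$, via the chain rule in Wasserstein space) and for $\cH$. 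By Theorem~\ref{th:subdiff} applied with $V\equiv 0$, combined with Assumption~\ref{as:sobolev} and the computation in the excerpt, $\partialb^0 \cH(\mu^\star)=-\nabla V=-\nabla F-\nabla G$ on $\mu^\star$-a.e.\ points; hence
\begin{equation*}
    \cE_F(\mu^\star)+\cH(\mu^\star)-\bE\ps{\nabla F(x^\star)+\partialb^0\cH(\mu^\star)(x^\star),\,T_{\mu^\star}^\mu(x^\star)-x^\star}\leq \cE_F(\mu)+\cH(\mu),
\end{equation*}
which, after invoking $\nabla F(x^\star)+\partialb^0\cH(\mu^\star)(x^\star)=-y^\star$ from~\eqref{eq:FOC}, gives exactly the bound that makes the previous display nonnegative. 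Combining the two directions produces the saddle-point identities~\eqref{eq:minmax}, and $\DG(\mu,y)\geq 0$ follows by adding $\Lag(\mu,y^\star)-\Lag(\mu^\star,y^\star)\geq 0$ and $\Lag(\mu^\star,y^\star)-\Lag(\mu^\star,y)\geq 0$.

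For the equality characterisations, if $\Lag(\mu^\star,y)=\cF(\mu^\star)$, the Fenchel--Young pointwise inequality $\ps{x^\star,y}\leq G(x^\star)+G^\ast(y)$ must be saturated $\bP$-a.s., hence $y\in\partial G(x^\star)$ a.s.; since $G$ is differentiable $\mu^\star$-a.e.\ (Rockafellar, as invoked right after Theorem~\ref{th:subdiff}), $\partial G(x^\star)=\{\nabla G(x^\star)\}=\{y^\star\}$ a.s., yielding $y=y^\star$. Conversely, if $F$ is strictly convex and $\cF(\mu^\star)=\Lag(\mu,y^\star)$, the two subgradient inequalities used in the previous paragraph must both be equalities; for $\cE_F$ this reads $\int[F(T_{\mu^\star}^\mu(x))-F(x)-\ps{\nabla F(x),T_{\mu^\star}^\mu(x)-x}]\,d\mu^\star(x)=0$, and since the integrand is nonnegative by convexity and \emph{strictly} positive whenever $T_{\mu^\star}^\mu(x)\neq x$, strict convexity of $F$ forces $T_{\mu^\star}^\mu=I$ $\mu^\star$-a.e., i.e.\ $\mu=\mu^\star$.

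The main obstacle I anticipate is the rigorous identification $\partialb^0\cH(\mu^\star)=-\nabla V$ and its use with Assumption~\ref{as:sobolev}: one must check that the Sobolev regularity transfers correctly through Theorem~\ref{th:subdiff} and that the resulting vector field lies in $L^2(\mu^\star;\sX)$, which is where Assumption~\ref{as:intgrad} is essential so that the inner product $\bE\ps{\partialb^0\cH(\mu^\star)(x^\star),T_{\mu^\star}^\mu(x^\star)-x^\star}$ is well-defined and finite. Beyond that, all other steps are direct consequences of Fenchel--Young and subgradient inequalities.
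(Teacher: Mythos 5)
Your proposal is correct and follows essentially the same route as the paper's proof: Fenchel--Young for the weak inequality $\Lag(\mu,y)\leq\cF(\mu)$ and its equality case, the decomposition of $\Lag(\mu,y^\star)-\Lag(\mu^\star,y^\star)$ into Bregman divergences of $\cE_F$ and $\cH$ controlled via~\eqref{eq:FOC}, and identical arguments for the two equality characterisations. The only (immaterial) difference is the order of deduction --- you obtain $\DG\geq 0$ by summing the two saddle inequalities, whereas the paper first writes $\DG(\mu,y)$ as a sum of three Bregman divergences (including that of $G^\ast$, which in your version is absorbed into the Fenchel--Young step) and reads off the saddle inequalities afterwards.
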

The proof of Theorem~\ref{lem:DGbregman} relies on using~\eqref{eq:saddleW2} to write the duality gap as the sum of the Bregman divergences of $F$, $G^\ast$ and $\cH$. We shall use the nonnegativity of the duality gap to derive convergence bounds for PSGLA. 

\section{Forward Backward representation of PSGLA}
\label{sec:alg}
In this section, we present our viewpoint on PSGLA~\eqref{eq:psgla}. More precisely, we represent PSGLA as a (stochastic) Forward Backward algorithm involving (stochastic) monotone operators which are not necessarily subdifferentials. 
 
\paragraph{Intuition.} Let $\pi \in \cP_2(\sX^2)$ and consider $A, B(\pi) \in L^2(\pi;\sX^2)$ the set valued maps
\begin{equation}
A : (x,y) \mapsto \begin{bmatrix} &   y \\ - x& +\partial G^\ast(y)\end{bmatrix}, \quad B(\pi) : (x,y) \mapsto \begin{bmatrix} \nabla F(x) + \partialb \cH(\mu)(x)\\ 0\end{bmatrix},
\end{equation}
where $\mu = x^\star \# \pi$. The maps $\pi \mapsto A$ and $\pi \mapsto B(\pi)$ satisfy a monotonicity property similar to~\eqref{eq:wass-monot} (note that $A$ is a maximal monotone operator as the sum of $S : (x,y) \mapsto (y,-x)$ and the subdifferential of the $\Gamma_0(\sX^2)$ function $(x,y) \mapsto G^\ast(y)$). Inclusion~\eqref{eq:saddleW2} can be rewritten as 
\begin{equation}
\label{eq:zero-op-monot}
    0 \in \left(A+B(\pi^\star)\right)(x,y), \text{ for } \pi^\star \text{ a.e. } (x,y).
\end{equation} 
\paragraph{Rigorous Forward Backward representation.} The ``monotone'' inclusion~\eqref{eq:zero-op-monot} intuitively suggests the following stochastic Forward Backward algorithm for obtaining samples from $\pi^\star$ (and hence from $\mu^\star$ by marginalizing):
\begin{align}
    P\begin{bmatrix} x^{k+1/2} - x^{k}\\ y^{k+1/2} - y^{k}\end{bmatrix} &= -\gamma \begin{bmatrix} \nabla f(x^k,\xi^{k+1}) - \sqrt{\frac{2}{\gamma}}W^{k+1}\\ 0 \end{bmatrix} \label{eq:forward}\\
    P\begin{bmatrix} x^{k+1} - x^{k+1/2}\\ y^{k+1} - y^{k+1/2}\end{bmatrix} &\in 
    -\gamma A(x^{k+1},y^{k+1})\label{eq:backward}.
\end{align}
Above,  $P \in \bR^{d \times d}_{++}$ is an appropriately chosen matrix. Indeed, Algorithm~\eqref{eq:forward}-\eqref{eq:backward} looks like a stochastic Forward-Backward algorithm~\cite{rosasco2016stochastic,com-pes-pafa16,bia-hac-16,bia-hac-sal-jca17} where the gradient is perturbed by a Gaussian vector, as in the Langevin algorithm~\eqref{eq:Langevin-vanilla}. In Algorithm~\eqref{eq:forward}-\eqref{eq:backward}, we cannot set $P$ to be the identity map of $\sX^2$ because the inclusion~\eqref{eq:backward} is intractable in this case. 
We take $P : (x,y) \mapsto x$, \textit{i.e.}, with our notations, $P = x^\star$. Although the matrix $P$ is only semi-definite positive, the next lemma shows that Algorithm~\eqref{eq:forward}-\eqref{eq:backward} is still well defined. More precisely, the next lemma shows that $x^{k+1} = \prox_{\gamma G}(x^{k+1/2})$ (by taking $z = (x^{k+1/2},y^{k+1/2})$ in the lemma) and hence the resulting algorithm~\eqref{eq:forward}-\eqref{eq:backward} is PSGLA. Based on the representation~\eqref{eq:forward}-\eqref{eq:backward} of PSGLA, the next lemma also provides an important inequality used later in the proof of Theorem~\ref{th:evi}.
\begin{lemma}
\label{lem:resolvent-LV}
Let $z = (x,y), z'=(x',y') \in \sX^2$. Then $P(z' - z) \in
    -\gamma A(z')$ if and only if $x' = \prox_{\gamma G}(x)$ and $y' = \prox_{G^\ast/\gamma}(x/\gamma)$.
Moreover, if $G \in \Gamma_0(\sX)$ is $1/\lambda_{G^\ast}$-smooth, then
\begin{align}
\label{eq:funda-lv}
   \|x' - x^\star\|^2 \leq& \|x - x^\star\|^2 -2\gamma\left(G^\ast(y') - G^\ast(y^\star) - \ps{y',x^\star}+ \ps{y^\star,x}\right)\nonumber\\ &-\gamma(\lambda_{G^\ast} + \gamma)\|y'-y^\star\|^2 + \gamma^2\|y^\star\|^2. 
\end{align}
\end{lemma}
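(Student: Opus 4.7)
The lemma has two parts, which I would prove in sequence. The first is a purely algebraic identity characterizing the resolvent of $A$ under the degenerate preconditioner $P$, and the second is an energy-type inequality that uses smoothness of $G$.

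For the resolvent identity, I would unpack $P(z'-z)\in -\gamma A(z')$ componentwise. Since $P$ acts as projection onto the first factor, the left-hand side equals $(x'-x,0)$, so the inclusion splits into $x' - x = -\gamma y'$ and $0 \in \gamma(x' - \partial G^\ast(y'))$. The first equation gives $y' = (x-x')/\gamma$, while the second rearranges to $x' \in \partial G^\ast(y')$, equivalently $y' \in \partial G(x')$. Substituting the former into $x = x' + \gamma y'$ yields $x \in x' + \gamma \partial G(x')$, i.e.\ $x' = \prox_{\gamma G}(x)$, and Moreau's identity $x = \prox_{\gamma G}(x) + \gamma\prox_{G^\ast/\gamma}(x/\gamma)$ then converts $y' = (x-x')/\gamma$ into $y' = \prox_{G^\ast/\gamma}(x/\gamma)$. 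The converse is immediate by reading the chain backwards.

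For inequality~\eqref{eq:funda-lv}, I would start from
\begin{equation*}
\|x' - x^\star\|^2 = \|x - x^\star\|^2 - 2\gamma\langle y', x - x^\star\rangle + \gamma^2\|y'\|^2,
\end{equation*}
obtained by substituting $x' = x - \gamma y'$, and reorganize the last two terms. Using $\|y'\|^2 = \|y^\star\|^2 + 2\langle y^\star, y' - y^\star\rangle + \|y' - y^\star\|^2$ to extract the $\gamma^2\|y^\star\|^2$ summand appearing on the right-hand side of~\eqref{eq:funda-lv}, and the identity $x - \gamma y^\star = x' + \gamma(y' - y^\star)$ to collapse the remaining cross term, the target inequality reduces to
\begin{equation*}
G^\ast(y') + \langle x', y^\star - y'\rangle + \tfrac{\lambda_{G^\ast}}{2}\|y' - y^\star\|^2 \leq G^\ast(y^\star).
\end{equation*}
This is precisely the $\lambda_{G^\ast}$-strong-convexity bound of $G^\ast$ at $y'$ evaluated at $y^\star$, with subgradient $x' \in \partial G^\ast(y')$ produced by Part~1. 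The strong convexity of $G^\ast$ is equivalent, by Fenchel duality, to the hypothesis that $G$ is $1/\lambda_{G^\ast}$-smooth.

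I expect the main difficulty to be careful bookkeeping of coefficients in this reduction: the factor $\gamma(\lambda_{G^\ast} + \gamma)$ multiplying $\|y'-y^\star\|^2$ in \eqref{eq:funda-lv} emerges only after combining the $\gamma^2\|y'-y^\star\|^2$ coming from the expansion of $\|y'\|^2$, the $\gamma\lambda_{G^\ast}\|y'-y^\star\|^2$ coming from strong convexity, and a cancelling $\gamma\|y'-y^\star\|^2$ absorbed via $x - \gamma y^\star = x' + \gamma(y'-y^\star)$. Conceptually, nothing beyond Moreau's identity, Fenchel duality, and the strong-convexity inequality of $G^\ast$ is required.
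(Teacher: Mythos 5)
Your proposal is correct and follows essentially the same route as the paper: both parts hinge on reading the inclusion componentwise to get $x'=x-\gamma y'$ and $x'\in\partial G^\ast(y')$, and then on expanding the square and invoking the $\lambda_{G^\ast}$-strong-convexity inequality of $G^\ast$ at $y'$ with subgradient $x'$. The paper merely packages the same algebra through the semi-norm identity $\|z'-z^\star\|_P^2=\|z-z^\star\|_P^2+2\ps{z'-z,z'-z^\star}_P-\|z'-z\|_P^2$ and derives $y'=\prox_{G^\ast/\gamma}(x/\gamma)$ directly from $0\in -x+\gamma y'+\partial G^\ast(y')$ rather than via Moreau's identity, which is an equivalent bookkeeping choice.
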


\section{Main results}
\label{sec:res}

We now provide our main result on PSGLA~\eqref{eq:psgla}. For $r \in \bN/2$, denote $\mu^r$ (resp. $\nu^r$) the distribution of $x^r$ (resp. $y^r$), defined in the previous section.



\begin{theorem}
\label{th:evi}
Let Assumptions~\ref{as:smooth},~\ref{as:int}, \ref{as:intgrad} and~\ref{as:bound-var} hold true. If $F$ is $\lambda_F$-strongly convex and $G$ is $1/\lambda_{G^\ast}$-smooth, then for every $\gamma \leq 1/L$,
\begin{align}
\label{eq:th}
    W^2(\mu^{k+1},\mu^\star)
    \leq& (1-\gamma\lambda_F)W^2(\mu^{k},\mu^\star)  -\gamma(\lambda_{G^\ast} + \gamma) W^2(\nu^{k+1},\nu^\star) \nonumber\\
    &-2\gamma \left(\Lag(\mu^{k+1/2},y^\star) - \Lag(\mu^\star,y_\star^{k+1})\right) + \gamma^2 C,
\end{align}
where $C \eqdef \int_{\interior(D)} \|\nabla G(x)\|^2 d\mu^{\star}(x) + 2 (L d+\sigma_F^2)$ and
$
y_{\star}^{k+1} \eqdef \prox_{G^\ast/\gamma}(x^{k+1/2}_{\star}/\gamma) \sim \nu^{k+1},
$ where $x^{k+1/2}_{\star} \eqdef T_{\mu^\star}^{\mu^{k+1/2}}(x^\star)$.
\end{theorem}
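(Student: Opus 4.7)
The target inequality~\eqref{eq:th} has the flavor of a one-step descent lemma, so my plan is to feed the pointwise Bregman inequality~\eqref{eq:funda-lv} of Lemma~\ref{lem:resolvent-LV} back into the Lagrangian~\eqref{eq:LagrangianW}, exploiting the Forward--Backward representation~\eqref{eq:forward}--\eqref{eq:backward} of PSGLA, the saddle identity~\eqref{eq:saddleW2}, and the Wasserstein subgradient inequality~\eqref{eq:subdiff}. All computations are carried out on the probability space $(\Omega,\pi^\star)=(\sX^2,\cB(\sX^2),\pi^\star)$ and the key discipline is coupling bookkeeping: every ``pointwise squared distance'' expectation has to upper bound a Wasserstein distance exactly, not merely up to slack.

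\textbf{Backward step.} I would apply~\eqref{eq:funda-lv} at the transported copy $x_\star^{k+1/2}\eqdef T_{\mu^\star}^{\mu^{k+1/2}}(x^\star)\sim\mu^{k+1/2}$, with $y_\star^{k+1}\eqdef\prox_{G^\ast/\gamma}(x_\star^{k+1/2}/\gamma)$ and $\prox_{\gamma G}(x_\star^{k+1/2})\sim\mu^{k+1}$. Taking $\pi^\star$-expectation produces
\[
W^2(\mu^{k+1},\mu^\star) \le \bE\|x_\star^{k+1/2}-x^\star\|^2 - \gamma(\lambda_{G^\ast}+\gamma)W^2(\nu^{k+1},\nu^\star) - 2\gamma\,\Delta_{G^\ast} + \gamma^2\!\!\int_{\interior(D)}\!\!\|\nabla G\|^2 d\mu^\star,
\]
where $\Delta_{G^\ast}\eqdef \cE_{G^\ast}(\nu_\star^{k+1})-\cE_{G^\ast}(\nu^\star)-\bE\ps{y_\star^{k+1},x^\star}+\bE\ps{y^\star,x_\star^{k+1/2}}$ is a $G^\ast$-side Bregman gap.

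\textbf{Forward step.} To upper bound $\bE\|x_\star^{k+1/2}-x^\star\|^2=W^2(\mu^{k+1/2},\mu^\star)$, I would use \emph{any} coupling of $\mu^{k+1/2}$ with $\mu^\star$. The natural choice is to run~\eqref{eq:forward} from $(x^k,\tilde x^\star)$, where $(x^k,\tilde x^\star)$ is the optimal coupling of $(\mu^k,\mu^\star)$. Expanding the square, using independence of $(W^{k+1},\xi^{k+1})$ from the past, Assumption~\ref{as:bound-var}, and $\bE\|W^{k+1}\|^2=d$ gives
\[
W^2(\mu^{k+1/2},\mu^\star) \le W^2(\mu^k,\mu^\star) - 2\gamma\,\bE\ps{\nabla F(x^k),x^k-\tilde x^\star} + \gamma^2\bE\|\nabla F(x^k)\|^2 + \gamma^2\sigma_F^2 + 2\gamma d.
\]
Strong convexity of $F$ gives $-2\gamma\bE\ps{\nabla F(x^k),x^k-\tilde x^\star}\le -\gamma\lambda_F W^2(\mu^k,\mu^\star) - 2\gamma(\cE_F(\mu^k)-\cE_F(\mu^\star)) - 2\gamma\bE\ps{\nabla F(\tilde x^\star),x^k-\tilde x^\star}$, and $L$-smoothness with $\gamma\le 1/L$ converts the $\gamma^2\bE\|\nabla F(x^k)\|^2+2\gamma d$ residuals into the $2\gamma^2(Ld+\sigma_F^2)$ half of the constant $C$, while the $\cE_F(\mu^k)$ piece is replaced by $\cE_F(\mu^{k+1/2})$ up to a remainder of the same order.

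\textbf{Emergence of the entropy.} The subtle step is producing the $\cH(\mu^{k+1/2})-\cH(\mu^\star)$ piece of $\Lag(\mu^{k+1/2},y^\star)-\Lag(\mu^\star,y_\star^{k+1})$, which never appears explicitly in~\eqref{eq:psgla}. Here I would invoke the saddle identity~\eqref{eq:saddleW2}, which yields $\partialb^0\cH(\mu^\star)(x^\star)=-\nabla F(x^\star)-y^\star$; plugging this into the orphaned cross-term $\bE\ps{\nabla F(\tilde x^\star),x^k-\tilde x^\star}$ together with the remaining $\bE\ps{y^\star,x_\star^{k+1/2}-x^\star}$ from the backward step collapses them into $\bE\ps{\partialb^0\cH(\mu^\star),\,x_\star^{k+1/2}-x^\star}$, and the Wasserstein subgradient inequality~\eqref{eq:subdiff} applied to $\cH$ at $\mu^\star$ along $T_{\mu^\star}^{\mu^{k+1/2}}$ bounds the negative of this by $\cH(\mu^{k+1/2})-\cH(\mu^\star)$.

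\textbf{Collation and main obstacle.} Summing the forward and backward bounds and regrouping the $\cE_F$, $\cH$, $\cE_{G^\ast}$ and inner-product terms according to~\eqref{eq:LagrangianW}, all Bregman-type gaps combine into exactly $-2\gamma(\Lag(\mu^{k+1/2},y^\star)-\Lag(\mu^\star,y_\star^{k+1}))$, the quadratic residues into $\gamma^2 C$, and the strong-convexity contribution into the $(1-\gamma\lambda_F)W^2(\mu^k,\mu^\star)$ factor. I expect the hardest part to be the entropy step: there is no $\cH$ in~\eqref{eq:psgla}, and bridging this gap without performing a direct Gaussian-kernel computation on the conditional law of $x^{k+1/2}$ is precisely what the primal--dual framework of Section~\ref{sec:pd-opt} (especially~\eqref{eq:saddleW2}) was built to enable. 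A secondary, mechanical obstacle is applying the pointwise inequality~\eqref{eq:funda-lv} to \emph{transported} copies $x_\star^{k+1/2},y_\star^{k+1}$ rather than to the algorithm's own iterates, so that the Wasserstein distances $W^2(\mu^{k+1},\mu^\star)$ and $W^2(\nu^{k+1},\nu^\star)$ come out without slack.
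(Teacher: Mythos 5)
Your backward step is exactly the paper's: apply \eqref{eq:funda-lv} at the transported copy $x_\star^{k+1/2}=T_{\mu^\star}^{\mu^{k+1/2}}(x^\star)$, take expectation, and observe that $\bE\|x_\star^{k+1/2}-x^\star\|^2=W^2(\mu^{k+1/2},\mu^\star)$ exactly while the other two squared distances bound $W^2(\mu^{k+1},\mu^\star)$ and $W^2(\nu^{k+1},\nu^\star)$ in the right directions; this correctly delivers the $\cE_{G^\ast}$ and cross-product parts of the Lagrangian gap. The genuine gap is in your forward step. The paper does not derive the forward inequality from scratch: it invokes \cite[Lemma 30]{durmus2018analysis} (restated as Lemma~\ref{lem:durmus}), which gives
\begin{equation*}
W^2(\mu^{k+1/2},\mu^\star)\le (1-\gamma\lambda_F)W^2(\mu^k,\mu^\star)+2\gamma^2(Ld+\sigma_F^2)-2\gamma\bigl(\cE_F(\mu^{k+1/2})+\cH(\mu^{k+1/2})-\cE_F(\mu^\star)-\cH(\mu^\star)\bigr),
\end{equation*}
so that the decomposition is clean (backward step supplies the $G^\ast$ part of the gap, forward step supplies the $F+\cH$ part) and there are no ``orphaned cross-terms'' to collapse. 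Your direct expansion of the square produces the term $\bE\|\sqrt{2\gamma}W^{k+1}\|^2=2\gamma d$, which is $O(\gamma)$, and no amount of $L$-smoothness with $\gamma\le 1/L$ converts $2\gamma d$ into the $2\gamma^2 Ld$ that sits inside $C$; the two differ by a factor $1/(\gamma L)\ge 1$. That $O(\gamma)$ term can only be traded against the entropy decrease along Gaussian convolution --- the evolution variational inequality for the heat flow as the Wasserstein gradient flow of $\cH$ --- which is precisely the content of the cited lemma and uses the Gaussian structure of the noise, not just convexity.

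Your proposed substitute mechanism for producing the entropy term also fails on its own terms. First, strong convexity gives $-\ps{\nabla F(x^k),x^k-\tilde x^\star}\le -(F(x^k)-F(\tilde x^\star))-\tfrac{\lambda_F}{2}\|x^k-\tilde x^\star\|^2$ with no leftover $\ps{\nabla F(\tilde x^\star),x^k-\tilde x^\star}$ term, so the quantity you plan to collapse via \eqref{eq:saddleW2} is not actually available. Second, and more decisively, the direction of \eqref{eq:subdiff} is wrong for your purpose: it gives $\ps{\partialb^0\cH(\mu^\star),T_{\mu^\star}^{\mu^{k+1/2}}-I}_{\mu^\star}\le \cH(\mu^{k+1/2})-\cH(\mu^\star)$, whereas to replace $-2\gamma\bE\ps{\partialb^0\cH(\mu^\star),x_\star^{k+1/2}-x^\star}$ by $-2\gamma(\cH(\mu^{k+1/2})-\cH(\mu^\star))$ inside an upper bound on $W^2(\mu^{k+1},\mu^\star)$ you would need the reverse inequality. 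The first-order condition \eqref{eq:saddleW2} and geodesic convexity at $\mu^\star$ are what make the duality gap nonnegative (Theorem~\ref{lem:DGbregman}); they are not what makes the entropy appear in the one-step recursion. To repair the proof, keep your backward step and replace your forward step by an appeal to Lemma~\ref{lem:durmus} (or reprove it via the heat-flow EVI); the two then sum directly to \eqref{eq:th}.
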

The proof of Theorem~\ref{th:evi} relies on using Lemma~\ref{lem:resolvent-LV} along with~\cite[Lemma 30]{durmus2018analysis}. Inspecting the proof of Theorem~\ref{th:evi}, one can see that any $\bar{\mu},\bar{y}$ can replace $\mu^\star,y^\star$\footnote{The proof does not rely on specific properties of the latter like being primal dual optimal.}. The situation is similar to primal dual algorithms in optimization~\cite{chambolle2016ergodic,drori2015simple} and Evolution Variational Inequalities in optimal transport~\cite{ambrosio2008gradient}.

The next corollary is obtained by using $\DG(\mu^{k+1/2},y_\star^{k+1}) \geq 0$ (Theorem~\ref{lem:DGbregman}) and iterating~\eqref{eq:th}. 

\begin{corollary}
\label{cor:rate}
Let Assumptions~\ref{as:smooth}--\ref{as:bound-var} hold true. If $\gamma \leq 1/L$, then
\begin{equation}
\label{eq:rateDG}
    \min_{j \in \{0,\ldots,k-1\}} \DG(\mu^{j+1/2},y_\star^{j+1}) \leq \tfrac{1}{2\gamma k}W^2(\mu^{0},\mu^\star) + \tfrac{\gamma}{2} C,
\end{equation}
\begin{equation}
\label{eq:ratedual}
    \min_{j \in \{1,\ldots,k\}} W^2(\nu^{j},\nu^\star)
    \leq \tfrac{1}{\gamma(\lambda_{G^\ast} + \gamma) k} W^2(\mu^{0},\mu^\star) + \tfrac{\gamma}{\lambda_{G^\ast} + \gamma} C.
\end{equation}
Finally, if $\lambda_F >0$, then 
\begin{equation}
\label{eq:ratescvx}
    W^2(\mu^{k},\mu^\star) \leq (1-\gamma \lambda_F)^k W^2(\mu^{0},\mu^\star) + \tfrac{\gamma}{\lambda_F} C.
\end{equation}

\end{corollary}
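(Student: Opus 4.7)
The plan is to derive all three bounds directly from the one-step inequality~\eqref{eq:th} of Theorem~\ref{th:evi}, by selectively dropping the three nonnegative terms on the right-hand side, and then either telescoping or invoking a standard geometric sum. Observe first that by definition of the duality gap~\eqref{eq:dualitygap} we have $\Lag(\mu^{k+1/2},y^\star) - \Lag(\mu^\star, y_\star^{k+1}) = \DG(\mu^{k+1/2}, y_\star^{k+1})$, and by Theorem~\ref{lem:DGbregman} this quantity is nonnegative. Likewise, $W^2(\nu^{k+1},\nu^\star) \geq 0$ and, under Assumption~\ref{as:smooth}, $\lambda_F \geq 0$ so $\gamma\lambda_F \geq 0$. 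These observations are the only ingredients we need beyond~\eqref{eq:th}.

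For~\eqref{eq:rateDG}, I would drop the Wasserstein term on the dual side (set $\gamma(\lambda_{G^\ast}+\gamma)W^2(\nu^{k+1},\nu^\star)$ to zero on the right) and also throw away the $-\gamma\lambda_F W^2(\mu^k,\mu^\star)$ contribution (bounding $(1-\gamma\lambda_F) \leq 1$). This yields
\begin{equation*}
2\gamma\,\DG(\mu^{j+1/2},y_\star^{j+1}) \leq W^2(\mu^j,\mu^\star) - W^2(\mu^{j+1},\mu^\star) + \gamma^2 C.
\end{equation*}
Summing over $j = 0,\dots,k-1$ telescopes the Wasserstein differences, leaving $W^2(\mu^0,\mu^\star) - W^2(\mu^k,\mu^\star) + k\gamma^2 C \leq W^2(\mu^0,\mu^\star) + k\gamma^2 C$. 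Dividing by $2\gamma k$ gives the average, and the minimum is bounded by the average. For~\eqref{eq:ratedual}, I would instead drop the duality gap term (which is nonnegative by Theorem~\ref{lem:DGbregman}) and the $\gamma\lambda_F$ term, leading to
\begin{equation*}
\gamma(\lambda_{G^\ast}+\gamma) W^2(\nu^{j+1},\nu^\star) \leq W^2(\mu^j,\mu^\star) - W^2(\mu^{j+1},\mu^\star) + \gamma^2 C,
\end{equation*}
which I would sum over $j=0,\dots,k-1$, telescope, and divide by $k\gamma(\lambda_{G^\ast}+\gamma)$.

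For~\eqref{eq:ratescvx}, I would drop both the duality-gap term and the $W^2(\nu^{k+1},\nu^\star)$ term, keeping only the strong convexity contraction, to obtain the clean recursion
\begin{equation*}
W^2(\mu^{k+1},\mu^\star) \leq (1-\gamma\lambda_F)\,W^2(\mu^k,\mu^\star) + \gamma^2 C.
\end{equation*}
A straightforward induction unrolls this to
\begin{equation*}
W^2(\mu^k,\mu^\star) \leq (1-\gamma\lambda_F)^k W^2(\mu^0,\mu^\star) + \gamma^2 C \sum_{j=0}^{k-1}(1-\gamma\lambda_F)^j \leq (1-\gamma\lambda_F)^k W^2(\mu^0,\mu^\star) + \tfrac{\gamma}{\lambda_F} C,
\end{equation*}
where in the last step I bound the geometric sum by $1/(\gamma\lambda_F)$. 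There is no real obstacle here; the only thing to be slightly careful about is that the step size condition $\gamma \leq 1/L$ together with $\lambda_F \leq L$ ensures $1 - \gamma\lambda_F \in [0,1)$, so both the telescoping arguments and the geometric sum are valid.
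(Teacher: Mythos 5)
Your proposal is correct and follows essentially the same route as the paper: drop the nonnegative terms as appropriate in the one-step inequality of Theorem~\ref{th:evi}, telescope the sum and bound the minimum by the average for~\eqref{eq:rateDG} and~\eqref{eq:ratedual}, and unroll the geometric recursion for~\eqref{eq:ratescvx}. The only cosmetic difference is that the paper keeps both nonnegative terms in a single summed inequality before specializing to each bound, whereas you treat them separately; the substance is identical.
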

If $G$ is Lipschitz continuous (in particular if $G \equiv 0$), then our Assumptions hold true. Moreover, inequality~\eqref{eq:rateDG} recovers~\cite[Corollary 18]{durmus2018analysis} but with the duality gap instead of the KL divergence. Obtaining a result in terms of KL divergence is hopeless for PSGLA in general because the KL divergence is infinite; see the appendix. Connecting the convergence of the duality gap to zero to known modes of convergence is left for future work. Besides, obtaining an inequality like~\eqref{eq:ratedual} that holds when $F$ is just convex is rather not standard in the literature on Langevin algorithm, see~\cite{rolland2020double,zou2018stochastic}. Corollary~\ref{cor:rate} implies the following complexity results. Given $\varepsilon >0$, choosing $\gamma = \min(1/L,\varepsilon/C)$ and $k \geq \max(L/\varepsilon,C/\varepsilon^2)W^2(\mu^0,\mu^\star)$ in inequality~\eqref{eq:rateDG} leads to $\min_{j \in \{0,\ldots,k-1\}} \DG(\mu^{j+1/2},y_\star^{j+1}) \leq \varepsilon$. If $\lambda_{G^\ast} >0 $ (\textit{i.e.}, if $G$ is smooth), choosing $\gamma = \min(1/L,\frac{\lambda_{G^\ast}\varepsilon}{2C})$ and $k \geq \max(\frac{2L}{\lambda_{G^\ast}\varepsilon},\frac{4C}{\lambda_{G^\ast}^2\varepsilon^2})W^2(\mu^0,\mu^\star)$ in inequality~\eqref{eq:ratedual} leads to $\min_{j \in \{1,\ldots,k\}} W^2(\nu^{j},\nu^\star) \leq \varepsilon$. Finally, if $\lambda_{F} >0 $ (\textit{i.e.}, if $F$ is strongly convex), choosing $\gamma = \min(1/L,\frac{\lambda_{F}\varepsilon}{2C})$ and $k \geq \frac{1}{\gamma \lambda_F} \log(2 W^2(\mu^0,\mu^\star)/\varepsilon)$ \textit{i.e.},
\begin{equation}
\label{eq:k}
    k \geq \max\left(\tfrac{L}{\lambda_F},\tfrac{2 C}{\lambda_F^2 \varepsilon}\right)\log\left(\tfrac{2 W^2(\mu^0,\mu^\star)}{\varepsilon}\right), \quad C = \int_{\interior(D)} \|\nabla G(x)\|^2 d\mu^{\star}(x) + 2 (L d+\sigma_F^2)
\end{equation}
in inequality~\eqref{eq:ratescvx}, leads to $W^2(\mu^{k},\mu^\star) \leq \varepsilon$. \footnote{The dependence in $d$ of the factor $W^2(\mu^0,\mu^\star)$ can be explicited under further assumptions, see~\cite{rolland2020double,durmus2018analysis}.} In the case where $G$ is $M$-Lipschitz continuous, the complexity~\eqref{eq:k} improves~\cite[Corollary 22]{durmus2018analysis} since $\int_{\interior(D)} \|\nabla G(x)\|^2 d\mu^{\star}(x) \leq M^2$ .
\section{Conclusion}
We made a step towards theoretical understanding the properties of the Langevin algorithm in the case where the target distribution is not smooth and not fully supported. This case is known to be difficult to analyze and has many applications~\cite{brosse2017sampling,durmus2018efficient,bubeck2018sampling}. Our analysis improves and extends the state of the art.

Moreover, our approach is new. We developed a primal dual theory for a minimization problem over the Wasserstein space, which is of independent interest. A broader duality theory for minimization problems in the Wasserstein space would be of practical and theoretical interest.

\section{Acknowledgement}
We thank Laurent Condat for introducing us to the primal dual view of the proximal gradient algorithm in Hilbert spaces.

\section{Broader impact}
\label{sec:broader}
Our work contributes to the understanding of a sampling algorithm used in statistics. Our main results are of theoretical nature (convergence rates). Therefore, we do not see any immediate societal impact of our results.

\bibliographystyle{plain}
\newcommand{\noop}[1]{} \def\cprime{$'$} \def\cdprime{$''$} \def\cprime{$'$}

\newpage
\appendix

\part*{Appendix}

\tableofcontents

\clearpage

\section{Numerical experiments}

In this section, we illustrate our results on PSGLA through numerical experiments. 
\paragraph{Sampling \textit{a posteriori}.} We consider a statistical framework where i.i.d.\ random vectors (data) $D_1,\ldots,D_n$ with distribution $\bP_{x^\star}$ are observed. We adopt a Bayesian strategy where we assume the distribution $\bP_{x^\star}$ to be indexed by a random vector $x^\star$ with values in $\sX$. Denote $\cL(\cdot,x^\star)$ the density of $\bP_{x^\star}$ (a.k.a. the likelihood function) w.r.t.\ some reference measure. Given a prior distribution for $x^\star$ with density $\pi$ w.r.t.\ $\Leb$, our goal is construct samples $x^1,\dots,x^k$ from the posterior distribution
\begin{equation}
\label{eq:posterior-distribution}
    \mu^\star(x | D_1,\ldots,D_n) \propto \pi(x)\prod_{i=1}^n \cL(D_i,x),
\end{equation}
in order \textit{e.g.} to estimate the mean \textit{a posteriori} via Monte Carlo approximations,
\begin{equation}
\label{eq:mean-a-posteriori}
    m^\star \eqdef \int x \mu^\star(x | D_1,\ldots,D_n)d\Leb(x) \simeq \frac{1}{k}\sum_{j = 1}^k x^j.
\end{equation}

\paragraph{Wishart distribution.} In the experiments, the Euclidean space $\sX$ is a space of $d\times d$ symmetric matrices and $\pi$ is the Wishart distribution defined by
\begin{equation}
\label{eq:Wishart}
    \pi(x) \propto |\det(x)|^{\frac{\nu-d-1}{2}}\exp \left(-\frac{\tr(V^{-1}x)}{2}\right)\indic_{\bR^{d\times d}_{++}}(x),
\end{equation}
where $\nu > d - 1$ and $V \in \bR^{d\times d}_{++}$ are parameters of the distribution. Note that $\pi(x)=0$ if $x$ is not a positive definite matrix. The mean of the Wishart distribution is equal to $\nu V$.
The Wishart distribution is widely used in Random matrix theory and applications, see~\cite{tao2012topics}. Indeed, the Wishart distribution is a conjugate prior to the Gaussian likelihood. More precisely,
assume that for every $D \in \bR^d$ and $x \in \bR^{d\times d}_{++}$,
\begin{equation}
\label{eq:Gaussian-likelihood}
    \cL(D,x) = \frac{1}{\sqrt{2\pi}^d}\exp\left(-\frac{1}{2}D^T x D\right)\sqrt{\det(x)},
\end{equation} 
is the density of a centered Gaussian distribution with precision matrix (\textit{i.e.}, inverse variance-covariance matrix) $x$. Then, if $\pi$ is Wishart with parameters $\nu$ and $V$ (\textit{i.e.}, $\pi$ is given by~\eqref{eq:Wishart}), then the posterior distribution $\mu^\star(\cdot | D_1,\ldots,D_n)$~\eqref{eq:posterior-distribution} is Wishart with parameters $\nu' = n + \nu$ and $V' = \left(I + \sum_{i=1}^n D_i D_i^T \right)^{-1}$:
\begin{equation}
\label{eq:conj}
    \mu^\star(x | D_1,\ldots,D_n) \propto |\det(x)|^{\frac{(\nu+n)-d-1}{2}}\exp \left(-\frac{\tr\left((V^{-1} + \sum_{i=1}^n D_i D_i^T)x\right)}{2}\right)\indic_{\bR^{d\times d}_{++}}(x).
\end{equation}
Moreover, the mean of the posterior distribution is equal to
\begin{equation}
\label{eq:mean-a-posteriori-2}
    m^\star = (n + \nu)\left(I + \sum_{i=1}^n D_i D_i^T \right)^{-1}.
\end{equation}

\paragraph{Setup.} We consider two \textit{a posterori} sampling problems.

First, we consider the task of learning the mean of the data. More precisely, $\bP_{x^\star}$ is a Gaussian distribution over $\bR$ with mean $x^\star$ and unit variance. We use the Wishart distribution $\pi$ with $V=I$ as the prior distribution. Note that in this one dimensional case, the Wishart distribution $\pi$ boils down to a Gamma distribution over $\bR$.

In other words, $\pi(x) \propto \exp(-G(x))$ and $\mu^\star(x | D_1,\ldots,D_n) \propto \exp(-G(x)-\sum_{i=1}^n f_i(x))$ where the functions $f_i,G \in \Gamma_0(\sX)$ are defined by
\begin{align*}
 G(x) &\eqdef -\frac{\nu-d-1}{2} \log |x| + \frac{x}{2} + \iota_{(0,+\infty)}(x),\\
 f_i(x) &\eqdef \frac{|x - D_i|^2}{2},
\end{align*} 
for every $i \in \{1,\ldots,n\}$. The data points $D_i$ are generated randomly using a Gaussian distribution. Note that $f_i$ is smooth and strongly convex. Moreover, $G$ is nonsmooth and the proximity operator of $G$ has a closed form thanks to recent results.\footnote{see \url{www.proximity-operator.net}} We consider $d=1$ in order to be able to represent the numerical results with histograms, see Figures~\ref{fig:simu-1d1}-\ref{fig:simu-1d2}.

\begin{figure}[ht!]
\[
  \begin{array}{cc}
 \includegraphics[width=.4\linewidth]{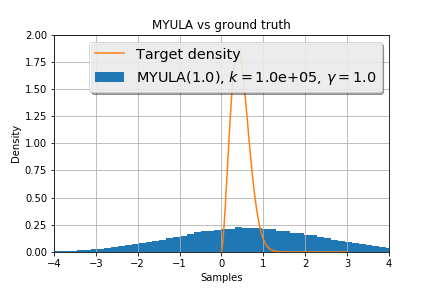} &
  \includegraphics[width=.4\linewidth]{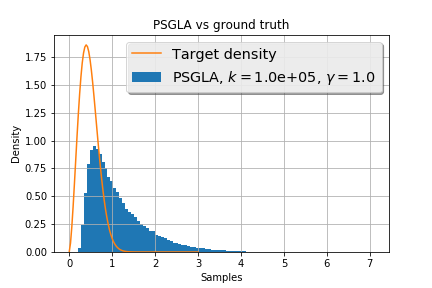}\\
    \includegraphics[width=.4\linewidth]{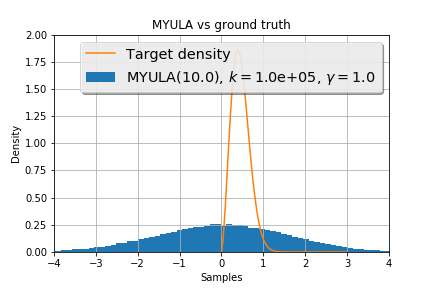} &
  \includegraphics[width=.4\linewidth]{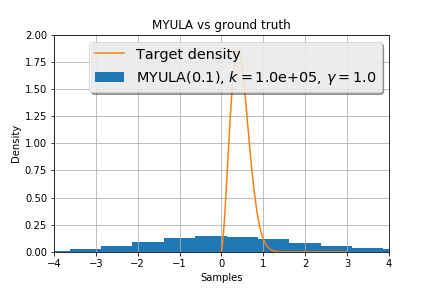}
  \end{array}
\]
    \caption{Histograms drawn by the $k$ iterates of PSGLA and MYULA($\lambda$), for various values of $\lambda$, compared to the target distribution $\mu^\star$. Case $d=1, \gamma = 1.0$.}
    \label{fig:simu-1d1}
\end{figure}

\begin{figure}[ht!]
\[
  \begin{array}{cc}
  \includegraphics[width=.4\linewidth]{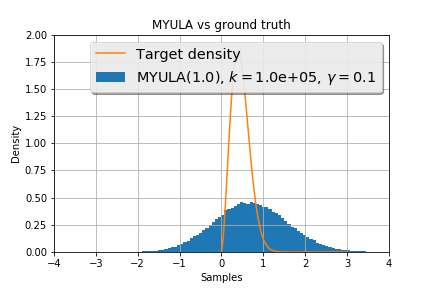} &
  \includegraphics[width=.4\linewidth]{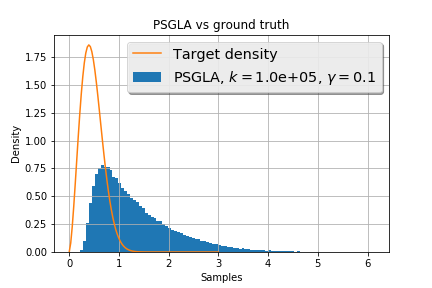}\\
    \includegraphics[width=.4\linewidth]{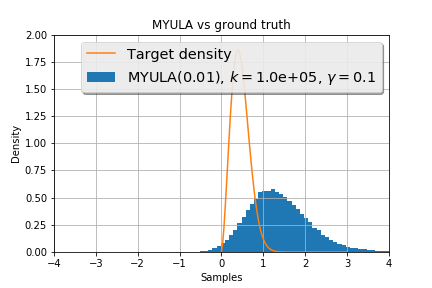} &
  \includegraphics[width=.4\linewidth]{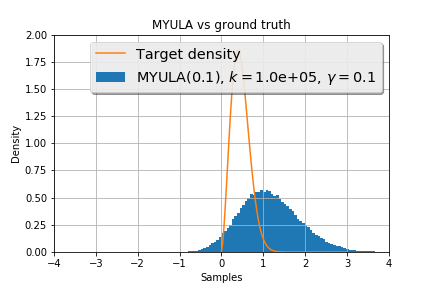}
  \end{array}
\]
    \caption{Histograms drawn by the $k$ iterates of PSGLA and MYULA($\lambda$), for various values of $\lambda$, compared to the target distribution $\mu^\star$. Case $d=1, \gamma = 0.1$.}
    \label{fig:simu-1d2}
\end{figure}

Then, we consider the task of learning the precision matrix of the data. More precisely, $\bP_{x^\star}$ is a centered Gaussian distribution over $\bR^d$ with precision matrix $x^\star$. We use the Wishart distribution $\pi$ with $V = I$ as the prior distribution. Since the prior distribution $\pi$~\eqref{eq:Wishart} is conjugate to the likelihood function $\cL(\cdot,x^\star)$~\eqref{eq:Gaussian-likelihood}, the posterior distribution $\mu^\star$ is given by~\eqref{eq:conj}. Thus, we can use $\mu^\star$ as a ground truth. In other words, $\mu^\star(x | D_1,\ldots,D_n) \propto \exp(-G(x)-F(x))$ where the functions $F,G \in \Gamma_0(\sX)$ are defined by
\begin{align*}
 G(x) &\eqdef -\frac{(\nu+n)-d-1}{2} \log |\det(x)| + \frac{\tr(x)}{2} + \iota_{\bR^{d\times d}_{++}}(x),\\
 F(x) &\eqdef \sum_{i = 1}^{n} \frac{\tr(D_i D_i^T x)}{2}.
\end{align*} 
The data points $D_i$ are generated randomly using a Gaussian distribution. Note that $F$ is smooth and convex, hence Assumptions~\ref{as:smooth}-\ref{as:int} are satisfied. Moreover, Assumptions~\ref{as:intgrad}-\ref{as:sobolev} are satisfied \textit{e.g.} if $n+\nu > d+3$. Finally, $G$ is nonsmooth and the proximity operator of $G$ has a closed form~\cite[Corollary 24.65]{bau17}. We consider several values of $d$: $ d = 1, d= 10$ and $d = 100$. The number of entries of the iterates is $d^2$ and, since the matrices are symmetric, the dimension of the sampling problem is slightly larger than $d^2 / 2$. Since the mean of $\mu^\star$ is known, we use it as a ground truth and perform a mean \textit{a posteriori} estimation using the estimators~\eqref{eq:mean-a-posteriori} constructed by PSGLA and MYULA. The convergence of the estimators is illustrated in Figures~\ref{fig:simu-d-10} and~\ref{fig:simu-d-100} for the cases $d = 10$ and $d = 100$. Moreover, in order to visualize better the multidimensional results of Figures~\ref{fig:simu-d-10} and~\ref{fig:simu-d-100}, we consider the \textit{same} sampling problem with $d=1$ and plot histograms to represent the results, see Figure~\ref{fig:simu-d-1}.

\begin{figure}[ht!]
\[
  \begin{array}{cc}
  \includegraphics[width=.4\linewidth]{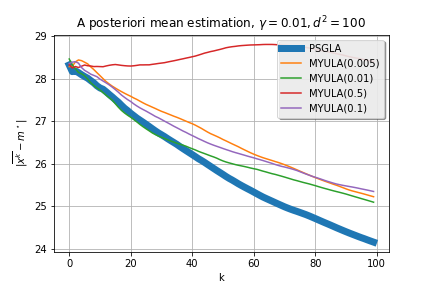} &
  \includegraphics[width=.4\linewidth]{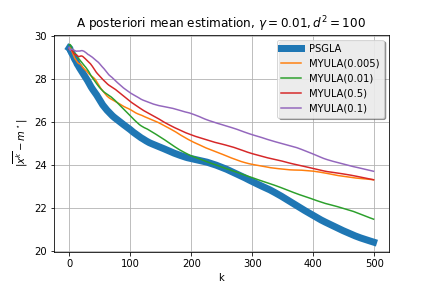}
  \\
    \includegraphics[width=.4\linewidth]{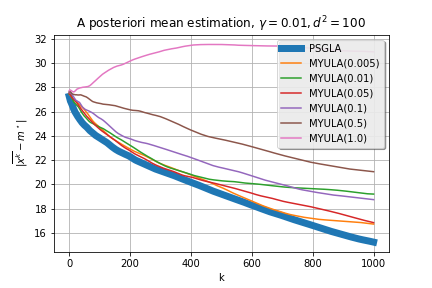} &
    \includegraphics[width=.4\linewidth]{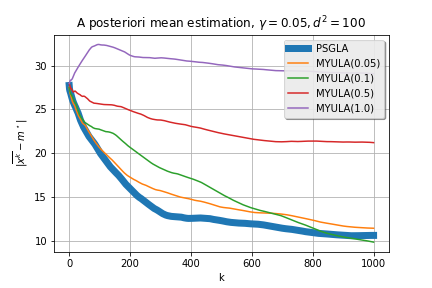}\\
  \includegraphics[width=.4\linewidth]{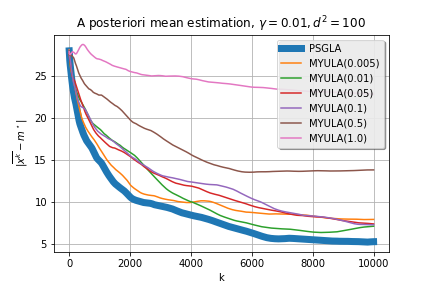} &
  \includegraphics[width=.4\linewidth]{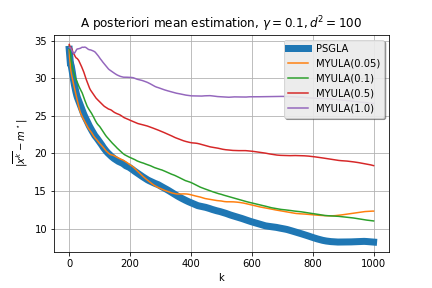}
  \end{array}
\]
    \caption{Frobenius distance between $m^\star$ and the mean \textit{a posteriori} estimators (ergodic means) constructed by PSGLA and MYULA($\lambda$), for various values of $\lambda, \gamma$ as a function of $k$ in the case $d = 10$.}
    \label{fig:simu-d-10}
\end{figure}

\begin{figure}[ht!]
\[
  \begin{array}{cc}
  \includegraphics[width=.4\linewidth]{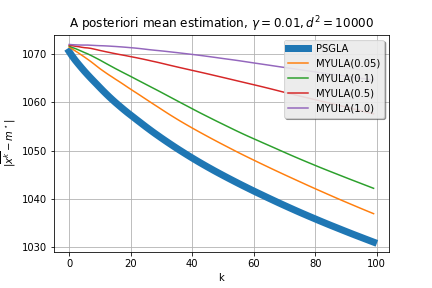} & \includegraphics[width=.4\linewidth]{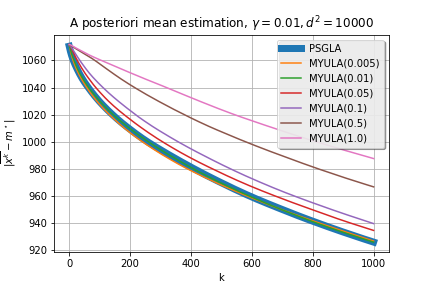}\\
    \includegraphics[width=.4\linewidth]{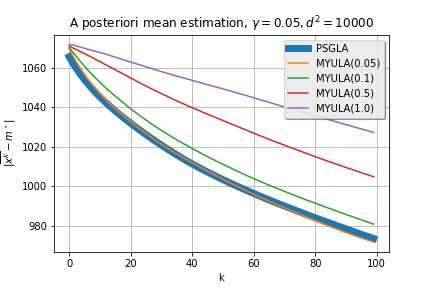} &
    \includegraphics[width=.4\linewidth]{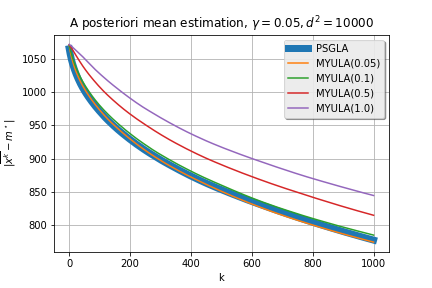}\\
    \includegraphics[width=.4\linewidth]{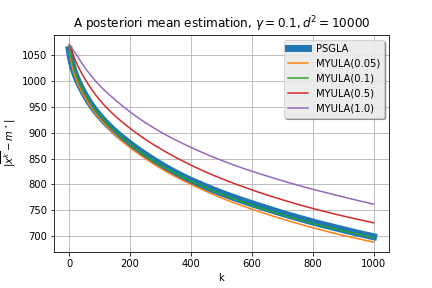}
   &
  \includegraphics[width=.4\linewidth]{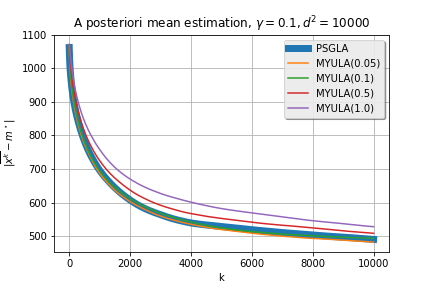}
  \end{array}
\]
    \caption{Frobenius distance between $m^\star$ and the mean \textit{a posteriori} estimators (ergodic means) constructed by PSGLA and MYULA($\lambda$), for various values of $\lambda, \gamma$ as a function of $k$ in the case $d = 100$.}
    \label{fig:simu-d-100}
\end{figure}

\begin{figure}[ht!]
\[
  \begin{array}{cc}
  \includegraphics[width=.4\linewidth]{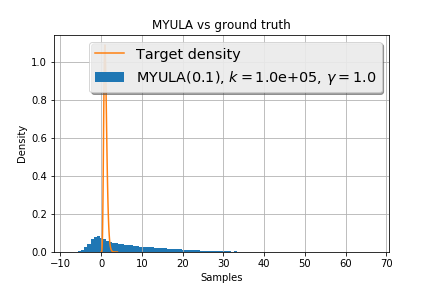} &
  \includegraphics[width=.4\linewidth]{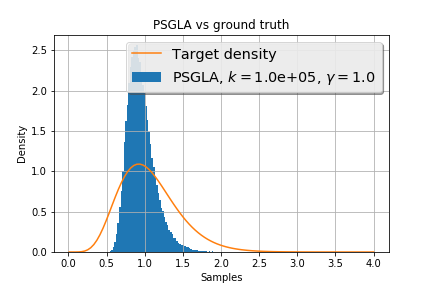}\\
    \includegraphics[width=.4\linewidth]{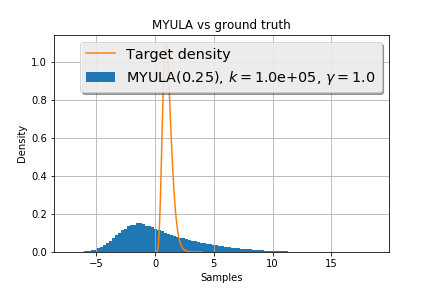} &
  \includegraphics[width=.4\linewidth]{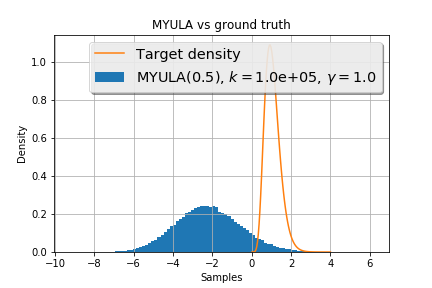}\\
    \includegraphics[width=.4\linewidth]{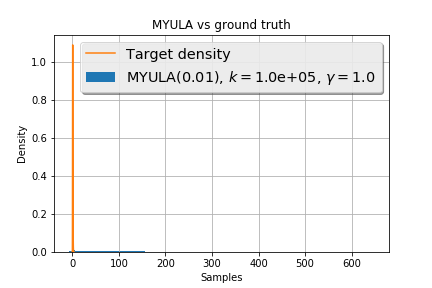} &
  \includegraphics[width=.4\linewidth]{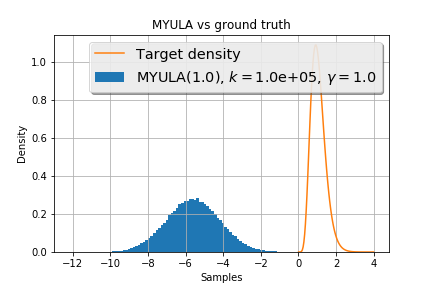}\\
  \end{array}
\]
    \caption{Histograms drawn by the $k$ iterates of PSGLA and MYULA($\lambda$), for various values of $\lambda$, compared to the target distribution $\mu^\star$. Case $d=1$, $\gamma = 1.0$.}
    \label{fig:simu-d-1}
\end{figure}

\paragraph{Algorithms.}
 We compare PSGLA to various versions of MYULA~\cite{durmus2018efficient}, parametrized by the smoothing parameter $\lambda >0$. We use the same learning rate $\gamma$ for the algorithms\footnote{In general, a Langevin algorithm becomes more precise and slower as $\gamma \to 0$.}. We denote these algorithms MYULA($\lambda$). Both MYULA and PSGLA compute one proximity operator $\prox_{\gamma G}$ and one gradient per iteration\footnote{We use a slight extension of MYULA allowing to use a stochastic gradient.}. They both require to sample one Gaussian random variable over the space of symmetric matrices at each iteration.

 \paragraph{Observations.}
 
 Using Langevin algorithm to sample from distributions which are not fully supported is known to be a difficult task~\cite{bubeck2018sampling,brosse2017sampling,durmus2018efficient}. In Figures~\ref{fig:simu-1d1}-\ref{fig:simu-1d2} and~\ref{fig:simu-d-1}, we see that the shape of the histograms drawn by PSGLA are closer to the target distribution than the shape of the benchmarks histograms. This means that PSGLA converges faster than the benchmarks for this sampling task. This behavior was expected: PSGLA does not introduce extra bias by introducing a smoothing parameter $\lambda$.

 \textit{More importantly, we see that the iterates of PSGLA are always feasible i.e., they lie in the support of the target distribution, contrary to the benchmarks.}

 Finally, PSGLA is a proximal method, whereas the benchmarks are instances of the standard Langevin algorithm (applied to the smoothed problem depending on $\lambda$). In stochastic optimization, proximal methods are known to be more stable than gradient methods~\cite{toulis2015stable}. This phenomenon is observed here. For instance, the range of step sizes allowed by the benchmarks is controlled by the smoothing parameter $\lambda$, see~\cite{durmus2018efficient}. Using a step size too large for the benchmarks leads to a numerical instability that does not occur for PSGLA. 
 
 Figures~\ref{fig:simu-d-10} and~\ref{fig:simu-d-100} are multidimensional extensions of Figure~\ref{fig:simu-d-1}. Each figure in~\ref{fig:simu-d-10} and~\ref{fig:simu-d-100} corresponds to a new run. We plotted the convergence of the mean \textit{a posteriori} estimators (\textit{i.e.}, ergodic means) constructed by MYULA($\lambda$) and PSGLA. 
 
 \textit{In general, we see that PSGLA is as good as the MYULA($\lambda$) for the best value of $\lambda$, while conserving the feasibility of the iterates, and without having to select the value of $\lambda$.}



\section{Postponed proofs}
\subsection{Proof of Lemma~\ref{lem:Vint}}
Using~\cite[Lemma 2.2.1]{brazitikos2014geometry}, there exist $A,B >0$, $\exp(-V(x)) \leq A\exp(-B\|x\|) \leq A$. The last inequality implies $\int \|x\|^2\exp(-V(x))dx < \infty$. Moreover, $-V(x) \leq \log(A) \leq C \eqdef \max(0,\log(A))$. 
Using that $u \mapsto u\exp(-u)$ is nonincreasing on $[1,+\infty)$.
\begin{align*}
    V(x)\exp(-V(x)) &= V(x)\exp(-V(x))\textbf{1}_{V(x) \leq \|x\|^2 + 1} + V(x)\exp(-V(x))\textbf{1}_{V(x) > \|x\|^2 + 1}\\
    &\leq \left(\|x\|^2 + 1\right)\exp(-V(x)) + \left(\|x\|^2 + 1\right)\exp\left(-(\|x\|^2 + 1)\right).
\end{align*}
Using $|V(x)| = V(x)\textbf{1}_{V(x)\geq0} - V(x)\textbf{1}_{V(x) < 0} \leq C + V(x)$,
$$|V(x)|\exp(-V(x)) \leq \left(\|x\|^2 + 1 + C\right)A\exp(-B\|x\|) + \left(\|x\|^2 + 1\right)\exp\left(-(\|x\|^2 + 1)\right).$$
We conclude using that the r.h.s. is integrable.

\subsection{Proof of Theorem~\ref{lem:DGbregman}}
The proof is divided in six parts, each part proving one claim. Denote $x = T_{\mu^\star}^\mu(x^\star)$. 

\paragraph{Part I.} First,
\begin{equation}
    \label{eq:Lagrangiannustar}
    \Lag(\mu,y^\star) = \cE_F(\mu) + \cH(\mu) - \cE_{G^\ast}(\nu^\star) + \bE\ps{x,y^\star},
\end{equation}
and
\begin{equation}
    \label{eq:Lagrangianmustar}
    \Lag(\mu^\star,y) = \cE_F(\mu^\star) + \cH(\mu^\star) - \cE_{G^\ast}(\nu) + \bE\ps{x^\star, y}.
\end{equation}
Therefore, the duality gap can be rewritten
\begin{align*}
    \DG(\mu,y) =& \cE_F(\mu) - \cE_F(\mu^\star) + \cH(\mu) - \cH(\mu^\star) +\cE_{G^\ast}(\nu) - \cE_{G^\ast}(\nu^\star)\\
    &+ \bE\ps{x, y^\star} - \bE\ps{x^\star, y}\\
    =& \cE_F(\mu) - \cE_F(\mu^\star) + \cH(\mu) - \cH(\mu^\star) +\cE_{G^\ast}(\nu) - \cE_{G^\ast}(\nu^\star)\\
    &+ \bE\ps{x - x^\star, y^\star} - \bE\ps{x^\star, y - y^\star}.
\end{align*}
Using~\eqref{eq:saddleW2}, $y^\star = -\nabla F(x^\star) -\partialb^0 \cH(\mu^\star)(x^\star)$ and $x^\star \in \partial G^\ast(y^\star)$.
\begin{align}
\label{eq:dg-sum-bregman}
    \DG(\mu,y) =& \cE_F(\mu) - \cE_F(\mu^\star) - \bE\ps{\nabla F(x^\star),x - x^\star}\nonumber\\
    &+ \cH(\mu) - \cH(\mu^\star) - \bE\ps{\partialb^0 \cH(\mu^\star)(x^\star),x - x^\star}\nonumber\\
    &+\cE_{G^\ast}(\nu) - \cE_{G^\ast}(\nu^\star) - \bE\ps{x^\star, y - y^\star}\nonumber\\
    =& \bE F(x) - \bE F(x^\star) - \bE\ps{\nabla F(x^\star),x - x^\star}\\
    &+ \cH(\mu) - \cH(\mu^\star) - \ps{\partialb^0 \cH(\mu^\star),T_{\mu^\star}^\mu - I}_{\mu^\star}\nonumber\\
    &+\bE G^\ast(y) - \bE G^\ast(y^\star) - \bE\ps{x^\star, y - y^\star},\nonumber
\end{align}
where the last equality comes from the transfer theorem. We get $\DG(\mu,y) \geq 0$ using the convexity of $F$, the convexity of $G^\ast$ and the geodesic convexity of $\cH$ (inequality~\eqref{eq:subdiff}).

\paragraph{Part II.} Since $\DG(\mu,y^\star) \geq 0$ and $\DG(\mu^\star,y) \geq 0$, 
\begin{equation}\Lag(\mu^\star,y) \leq \Lag(\mu^\star,y^\star) \leq \Lag(\mu,y^\star).\end{equation}
\paragraph{Part III.} Then,
\begin{align}
    \Lag(\mu,y) &= \cE_F(\mu) + \cH(\mu) + \bE\left(\ps{x,y} - G^\ast(y)\right)\label{eq:equal}\\
    &\leq \cE_F(\mu) + \cH(\mu) + \bE\left(\sup \ps{x,\cdot} - G^\ast\right)\label{eq:inequal}\\
    &\leq \cE_F(\mu) + \cH(\mu) + \bE G(x) \label{eq:last}\\
    &= \cF(\mu)\nonumber,
\end{align}
using~\cite[Proposition 13.15]{bau17}.

\paragraph{Part IV.} Using~\cite[Proposition 16.10]{bau17}, $\sup \ps{x,\cdot} - G^\ast = G(x)$, and $\ps{x,y} - G^\ast(y) = G(x)$ if and only if $y \in \partial G(x)$ (or $x \in \partial G^\ast(y)$).  
Taking $\mu = \mu^\star$ and $y = y^\star$ in~\eqref{eq:equal}, we have $x^\star = x \in \partial G^\ast(y^\star)$ and therefore $\Lag(\mu^\star,y^\star) = \cF(\mu^\star)$. 

\paragraph{Part V.} Assume that $\Lag(\mu^\star,\bar{y}) = \cF(\mu^\star)$. We shall prove that $\bar{y}=y^\star$ a.s. Since $\cF(\mu^\star) \leq \Lag(\mu^\star,\bar{y}),$ inequality~\eqref{eq:inequal} becomes an equality when $\mu = \mu^\star$ and $y = \bar{y}$. Therefore, $\ps{x^\star,\bar{y}} - G^\ast(\bar{y}) = \sup \ps{x^\star,\cdot} - G^\ast = G(x^\star)$ a.s., which implies $\bar{y} \in \partial G(x^\star) = \{\nabla G(x^\star)\} = \{y^\star\}$ a.s. 

\paragraph{Part VI.} Assume that $\Lag(\mu^\star,y^\star) = \Lag(\bar{\mu},y^\star)$ and that $F$ is strictly convex. We shall prove that $\mu^\star = \bar{\mu}$. We have $\DG(\bar{\mu},y^\star) = 0$, and, using~\eqref{eq:dg-sum-bregman}, 
\begin{equation}
\DG(\bar{\mu},y^\star) = \bE F(\bar{x}) - \bE F(x^\star) - \bE\ps{\nabla F(x^\star),\bar{x} - x^\star}
+ \cH(\bar{\mu}) - \cH(\mu^\star) - \ps{\partialb^0 \cH(\mu^\star),T_{\mu^\star}^{\bar{\mu}} - I}_{\mu^\star}, \end{equation}
where $\bar{x} = T_{\mu^\star}^{\bar{\mu}}(x^\star)$.
Using the convexity of $F$ and the geodesic convexity of $\cH$, $F(\bar{x}) - F(x^\star) - \ps{\nabla F(x^\star),\bar{x} - x^\star} = 0$ a.s. Using the strict convexity of $F$, $\bar{x} = x^\star$ a.s., therefore $\bar{\mu} = \mu^\star$. 

\subsection{Proof of Lemma~\ref{lem:resolvent-LV}}
If $x' = x -\gamma y'$ and $0 \in -x' + \partial G^\ast(y')$, we have  $y' \in \partial G(x')$ and $x' = \prox_{\gamma G}(x)$. Moreover, $0 \in -x + \gamma y' + \partial G^\ast(y')$ implies $y' = \prox_{G^\ast/\gamma}(x/\gamma)$. One can easily check that this is an equivalence. Denote $\|\cdot\|_P$ the semi-norm induced by $P$ on $\sX^2$ defined by $\|z\|_P = \|x\|$ for every $z = (x,y) \in \sX^2$, and $\ps{\cdot,\cdot}_P$ the semi-inner product associated. We have
\begin{equation}
\label{eq:funda-fb}
  \|z' - z^\star\|_P^2 
    = \|z - z^\star\|_P^2 +2\ps{z' - z,z' - z^\star}_P -\|z' - z\|_P^2.
 \end{equation}
 We now identify the terms. First, $\|z' - z^\star\|_P^2 = \|x' - x^\star\|^2$, $\|z - z^\star\|_P^2 = \|x - x^\star\|^2$ and $\|z' - z\|_P^2 = \|x'-x\|^2 = \gamma^2 \|y'\|^2$. Second, using $P(z' - z) \in -\gamma A(z')$, and the definition of $A$, there exists $g^\ast \in G^\ast(y')$ such that 
 \begin{equation}
 \ps{z' - z,z' - z^\star}_P = \ps{P(z' - z),z' - z^\star} = -\gamma \ps{y',x' - x^\star} + \gamma\ps{x',y' - y^\star} -\gamma\ps{g^\ast,y'-y^\star}.
 \end{equation}
  Hence 
  \begin{equation}\ps{z' - z,z' - z^\star}_P \leq \gamma\ps{y',x^\star} - \gamma \ps{x',y^\star} -\gamma \left(G^\ast(y') - G^\ast(y^\star) + \frac{\lambda_{G^\ast}}{2}\|y' - y^\star\|^2\right),
  \end{equation}
  using the strong convexity of $G^\ast$.
 Plugging into~\eqref{eq:funda-fb},
\begin{equation}
\label{eq:funda-pg}
    \|x' - x^\star\|^2 \leq \|x - x^\star\|^2 -2\gamma\left(G^\ast(y') - G^\ast(y^\star) + \frac{\lambda_{G^\ast}}{2}\|y' - y^\star\|^2 - \ps{y',x^\star}+ \ps{y^\star,x'}\right) -\gamma^2\|y'\|^2.
\end{equation}
Using $x' = x - \gamma y'$, 
\begin{equation}
    -2\gamma \ps{y^\star,x'} -\gamma^2\|y'\|^2 = -2\gamma \ps{y^\star,x} -2\gamma \ps{y^\star,- \gamma y'} -\gamma^2\|y'\|^2 = -2\gamma \ps{y^\star,x} -\gamma^2\|y'-y^\star\|^2 + \gamma^2\|y^\star\|^2.
\end{equation}
Plugging into~\eqref{eq:funda-pg} concludes the proof.

\subsection{Proof of Theorem~\ref{th:evi}}
We first recall a standard inequality of the stochastic gradient Langevin algorithm, see \textit{e.g.}~\cite[Lemma 30]{durmus2018analysis}.
\begin{lemma}[\cite{durmus2018analysis}]
\label{lem:durmus}
Let Assumptions~\ref{as:smooth},~\ref{as:int} and ~\ref{as:bound-var} hold true. Then, if $F$ is $\lambda_F$-strongly convex, for every $\gamma \leq 1/L$,
\begin{align}
W^2(\mu^{k+1/2},\mu^\star) \leq& (1-\gamma\lambda_F)W^2(\mu^k,\mu^\star) + 2\gamma^2 (L d +\sigma_F^2)\nonumber\\
&- 2\gamma\left(\cE_F(\mu^{k+1/2}) + \cH(\mu^{k+1/2}) - \cE_F(\mu^\star) - \cH(\mu^\star)\right).
\end{align}
\end{lemma}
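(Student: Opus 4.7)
\textbf{Proof plan for Theorem~\ref{th:evi}.} The plan is to split each PSGLA iteration into two half-steps and analyze them separately: a noisy gradient half-step $x^k \to x^{k+1/2}$ and a proximal half-step $x^{k+1/2} \to x^{k+1}$. This splitting is natural in view of the Forward Backward representation~\eqref{eq:forward}--\eqref{eq:backward}. For the forward step, I would quote Lemma~\ref{lem:durmus} verbatim, which gives
\[
W^2(\mu^{k+1/2},\mu^\star) \leq (1-\gamma\lambda_F)W^2(\mu^{k},\mu^\star) + 2\gamma^2(Ld+\sigma_F^2) -2\gamma\bigl(\cE_F(\mu^{k+1/2})+\cH(\mu^{k+1/2})-\cE_F(\mu^\star)-\cH(\mu^\star)\bigr).
\]
For the backward step, I would apply Lemma~\ref{lem:resolvent-LV} pointwise along the carefully chosen coupling described below.

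The key coupling choice is to push $x^\star$ through the optimal transport map to $\mu^{k+1/2}$, obtaining $x_\star^{k+1/2} = T_{\mu^\star}^{\mu^{k+1/2}}(x^\star) \sim \mu^{k+1/2}$, and then through the prox to get $x_\star^{k+1} \eqdef \prox_{\gamma G}(x_\star^{k+1/2}) \sim \mu^{k+1}$ and $y_\star^{k+1} \eqdef \prox_{G^\ast/\gamma}(x_\star^{k+1/2}/\gamma) \sim \nu^{k+1}$. Instantiating Lemma~\ref{lem:resolvent-LV} at the point $(x_\star^{k+1/2},\cdot)$ gives the pointwise inequality
\[
\|x_\star^{k+1}-x^\star\|^2 \leq \|x_\star^{k+1/2}-x^\star\|^2 - 2\gamma\bigl(G^\ast(y_\star^{k+1})-G^\ast(y^\star)-\langle y_\star^{k+1},x^\star\rangle + \langle y^\star, x_\star^{k+1/2}\rangle\bigr) - \gamma(\lambda_{G^\ast}+\gamma)\|y_\star^{k+1}-y^\star\|^2 + \gamma^2\|y^\star\|^2.
\]
Then I would take expectation under $\pi^\star$. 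The left-hand side upper bounds $W^2(\mu^{k+1},\mu^\star)$ since $(x^\star,x_\star^{k+1})$ is some (possibly suboptimal) coupling between $\mu^\star$ and $\mu^{k+1}$; the first right-hand term equals $W^2(\mu^{k+1/2},\mu^\star)$ by optimality of $T_{\mu^\star}^{\mu^{k+1/2}}$; the term $-\gamma(\lambda_{G^\ast}+\gamma)\bE\|y_\star^{k+1}-y^\star\|^2$ is in turn upper bounded by $-\gamma(\lambda_{G^\ast}+\gamma) W^2(\nu^{k+1},\nu^\star)$ since $(y^\star,y_\star^{k+1})$ is a coupling of $(\nu^\star,\nu^{k+1})$; and the residual $\gamma^2\bE\|y^\star\|^2$ equals $\gamma^2\int_{\interior(D)}\|\nabla G\|^2 d\mu^\star$ by Theorem~\ref{th:subdiff} and the identity $y^\star=\nabla G(x^\star)$ a.s.

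Chaining the backward and forward inequalities yields a right-hand side comprising $(1-\gamma\lambda_F)W^2(\mu^k,\mu^\star)$, the Wasserstein contraction $-\gamma(\lambda_{G^\ast}+\gamma)W^2(\nu^{k+1},\nu^\star)$, the residual $\gamma^2 C$ with the claimed $C$, and a combined $-2\gamma$-multiple of
\[
\bigl(\cE_F(\mu^{k+1/2})+\cH(\mu^{k+1/2})-\cE_F(\mu^\star)-\cH(\mu^\star)\bigr) + \bigl(\cE_{G^\ast}(\nu^{k+1})-\cE_{G^\ast}(\nu^\star) -\bE\langle y_\star^{k+1},x^\star\rangle+\bE\langle y^\star,x_\star^{k+1/2}\rangle\bigr).
\]
The final step is then a direct algebraic check that this bracket equals $\Lag(\mu^{k+1/2},y^\star)-\Lag(\mu^\star,y_\star^{k+1})$ using the definition~\eqref{eq:LagrangianW}, the fact that $T_{\mu^\star}^{\mu^\star}=I$ so the coupling term in $\Lag(\mu^\star,y_\star^{k+1})$ is $\bE\langle x^\star,y_\star^{k+1}\rangle$, and the identifications $\cE_{G^\ast}(\nu_\star^{k+1})=\cE_{G^\ast}(\nu^{k+1})$ (since $y_\star^{k+1}\sim\nu^{k+1}$).

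The main obstacle is the backward step: one must not apply Lemma~\ref{lem:resolvent-LV} at a generic sample $x^{k+1/2}\sim\mu^{k+1/2}$ (that would produce a coupling between $\mu^{k+1}$ and $\mu^\star$ with no control over the cross terms $\bE\langle y^\star,x_\star^{k+1/2}\rangle$ appearing in $\Lag$), but rather at the transported sample $x_\star^{k+1/2}=T_{\mu^\star}^{\mu^{k+1/2}}(x^\star)$ so that the cross terms assemble into exactly the Lagrangian difference. A secondary subtlety is that the resulting coupling between $\mu^\star$ and $\mu^{k+1}$ is not optimal, nor is $(y^\star,y_\star^{k+1})$ optimal for $(\nu^\star,\nu^{k+1})$; both substitutions go in the correct direction only because the associated terms enter the inequality with the right signs ($\bE\|x_\star^{k+1}-x^\star\|^2$ on the larger side, $-\gamma(\lambda_{G^\ast}+\gamma)\bE\|y_\star^{k+1}-y^\star\|^2$ on the smaller side), which is why this decomposition succeeds.
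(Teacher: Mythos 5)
Your proposal does not prove the statement at hand. The statement is Lemma~\ref{lem:durmus}, the one-step inequality for the \emph{forward} (stochastic gradient Langevin) half-step $x^k \mapsto x^{k+1/2}$; what you wrote is a proof plan for Theorem~\ref{th:evi}, and for the forward half-step it says you ``would quote Lemma~\ref{lem:durmus} verbatim'' --- that is, it assumes exactly the inequality it was asked to establish. Everything you write about the backward half-step (transporting $x^\star$ through $T_{\mu^\star}^{\mu^{k+1/2}}$, applying Lemma~\ref{lem:resolvent-LV} pointwise, taking expectations and reassembling the cross terms into the Lagrangian gap) is a faithful reconstruction of the paper's proof of Theorem~\ref{th:evi}, but none of it bears on Lemma~\ref{lem:durmus} itself, which concerns only the distribution $\mu^{k+1/2}$ before the proximal map is applied.

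For the record, the paper does not prove Lemma~\ref{lem:durmus} either: it imports it from~\cite[Lemma 30]{durmus2018analysis}. An actual proof must handle two distinct effects in the forward step. The drift part $x^k - \gamma\nabla f(x^k,\xi^{k+1})$ is controlled by coupling $x^k$ optimally with $x^\star\sim\mu^\star$ and invoking $\lambda_F$-strong convexity, $L$-smoothness and the variance bound of Assumption~\ref{as:bound-var}; this yields the contraction factor $(1-\gamma\lambda_F)$, the term $-2\gamma\left(\cE_F(\mu^{k+1/2})-\cE_F(\mu^\star)\right)$ and the $\sigma_F^2$ part of the remainder. The Gaussian part $\sqrt{2\gamma}W^{k+1}$ amounts to running the heat semigroup for time $\gamma$, which is the Wasserstein gradient flow of $\cH$ and satisfies an evolution variational inequality producing $-2\gamma\left(\cH(\mu^{k+1/2})-\cH(\mu^\star)\right)$ together with the $Ld$ contribution to the remainder. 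Neither ingredient appears in your proposal, so as a proof of the stated lemma it is circular.
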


We now prove Theorem~\ref{th:evi}.
The main tool for the proof is Lemma~\ref{lem:resolvent-LV}. Replace $x$ by $x^{k+1/2}_{\star} \sim \mu^{k+1/2}$ in~\eqref{eq:funda-lv}. Then $y' = y^{k+1}_{\star} \sim \nu^{k+1}$ and $\prox_{\gamma G}(x^{k+1/2}_{\star}) \sim \mu^{k+1}$. Therefore,
\begin{equation*}
W^2(\mu^{k+1},\mu^\star) \leq \bE(\|\prox_{\gamma G}(x^{k+1/2}_{\star}) - x^\star\|^2), \quad
W^2(\nu^{k+1},\nu^\star) \leq \bE(\|y^{k+1}_{\star}-y^\star\|^2).
\end{equation*}
 Consequently, taking expectation in~\eqref{eq:funda-lv} we get
\begin{align*}
    W^2(\mu^{k+1},\mu^\star) \leq& W^2(\mu^{k+1/2},\mu^\star) -\gamma(\lambda_{G^\ast} + \gamma) W^2(\nu^{k+1},\nu^\star) + \gamma^2 \int \|y\|^2 d\nu^\star(y)\\ &-2\gamma\left(\cE_{G^\ast}(\nu^{k+1}) - \cE_G^\ast(\nu^\star) - \bE\ps{y^{k+1}_{\star},x^\star}+ \bE\ps{y^\star,x^{k+1/2}_{\star}}\right).
\end{align*}
Combining with Lemma~\ref{lem:durmus}, we get the result.

\subsection{Proof of Corollary~\ref{cor:rate}}
From Theorem~\ref{th:evi},
\begin{align}
    \gamma(\lambda_{G^\ast} + \gamma) W^2(\nu^{j+1},\nu^\star) + 2\gamma \DG(\mu^{j+1/2},y_\star^{j+1})
    \leq& W^2(\mu^{j},\mu^\star) - W^2(\mu^{j+1},\mu^\star) +\gamma^2 C.
\end{align}
Summing over $j \in \{0,\ldots,k-1\}$,
\begin{align}
    &\gamma(\lambda_{G^\ast} + \gamma) \sum_{j=0}^{k-1} W^2(\nu^{j+1},\nu^\star) + 2\gamma \sum_{j=0}^{k-1} \DG(\mu^{j+1/2},y_\star^{j+1})
    \\
    \leq& W^2(\mu^{0},\mu^\star) - W^2(\mu^{k},\mu^\star) +k \gamma^2 C.
\end{align}
Therefore, 
\begin{align}
    &\gamma(\lambda_{G^\ast} + \gamma) k \min_{j \in \{0,\ldots,k-1\}} W^2(\nu^{j+1},\nu^\star) + 2\gamma k \min_{j \in \{0,\ldots,k-1\}} \DG(\mu^{j+1/2},y_\star^{j+1})
    \\
    \leq& W^2(\mu^{0},\mu^\star) + k \gamma^2 C,
\end{align}
which implies
\begin{equation}
    \min_{j \in \{0,\ldots,k-1\}} \DG(\mu^{j+1/2},y_\star^{j+1})
    \leq \frac{1}{2\gamma k}W^2(\mu^{0},\mu^\star) + \gamma \frac{C}{2},
\end{equation}
and, 
\begin{equation}
    \min_{j \in \{1,\ldots,k\}} W^2(\nu^{j},\nu^\star)
    \leq \frac{1}{\gamma(\lambda_{G^\ast} + \gamma) k} W^2(\mu^{0},\mu^\star) + \frac{\gamma}{\lambda_{G^\ast} + \gamma} C.
\end{equation}
Moreover, if $\lambda_F >0$, Theorem~\ref{th:evi} implies
\begin{equation}
    W^2(\mu^{k+1},\mu^\star)
    \leq (1-\gamma\lambda_F)W^2(\mu^{k},\mu^\star) + \gamma^2 C.
\end{equation}
Iterating, we obtain
\begin{equation}
    W^2(\mu^{k},\mu^\star)
    \leq (1-\gamma\lambda_F)^k W^2(\mu^{0},\mu^\star) + \gamma \frac{C}{\lambda_F}.
\end{equation}
\section{Further intuition on PSGLA}

\subsection{Stochastic gradient descent interpretation of the stochastic gradient Langevin algorithm}
As mentionned in the introduction, Langevin algorithm can be interpreted as a gradient descent algorithm in the space $\cP_2(\sX)$ to minimize $\KL(\cdot|\mu^\star)$, see \textit{e.g.}~\cite{durmus2018analysis}. More precisely, consider the case where $G \equiv 0$ and denote $\mu^k$ the distribution of $x^k$. Then PSGLA boils down to the stochastic gradient Langevin algorithm (\textit{i.e.}, PSGLA without proximal step) and satisfy the following inequality (Lemma~\ref{lem:durmus})
\begin{equation}
\label{eq:LangevinSGD}
W^2(\mu^{k+1},\mu^\star) \leq (1-\gamma\lambda_F)W^2(\mu^k,\mu^\star)
- 2\gamma\left(\cF(\mu^{k+1}) - \cF(\mu^\star)\right) + 2\gamma^2 (L d +\sigma_F^2),
\end{equation}
if $F$ is $L$-smooth, $\lambda_F$-strongly convex and $\gamma \leq 1/L$. The last inequality is similar to a standard inequality used in the analysis of SGD. More precisely, the analysis of SGD often relies on an inequality similar to~\eqref{eq:LangevinSGD}, by replacing the Wasserstein distance by the Euclidean distance and $\cF$ by the objective function to be minimized by SGD (note that $L d +\sigma_F^2$ is a constant). Therefore, unrolling the recursion~\eqref{eq:LangevinSGD} (which is the standard way to obtain convergence rates for SGD) leads to the complexity $\cO(1/\varepsilon^2)$ in terms of objective gap $\cF(\mu) - \cF(\mu^\star)$. Using~\eqref{eq:KL-F}, recall that the objective gap is the KL divergence. 

In this paper, we considered the case $G \neq 0$. One can obtain an inequality similar to~\eqref{eq:LangevinSGD} for PSGLA if $G$ is Lipschitz continuous, see~\cite{durmus2018analysis,salim2019stochastic}. However, for a general $G \in \Gamma_0(\sX)$, it is hopeless. Indeed, $\cF(\mu^{k+1}) - \cF(\mu^\star)= +\infty$ in general because $\cH(\mu^{k+1}) = +\infty$ since $\mu^{k+1}$ is not absolutely continuous w.r.t.\ $\Leb$ (\textit{e.g.} when the proximal step is a projection). Moreover, $\cF(\mu^{k+1/2}) - \cF(\mu^\star)= +\infty$ in general because $\cE_G(\mu^{k+1/2}) = +\infty$ since $\mu^{k+1/2}$ is not supported by $\dom(G)$ ($\supp(\mu^{k+1/2}) = \sX$ because of the Gaussian noise). Therefore, one cannot obtain a rate in terms of KL divergence (\textit{i.e.}, objective gap) for PSGLA in general, since the KL divergence is equal to $+\infty$. 

On order to overcome this difficulty, we assumed~\ref{as:sobolev} and adopted a primal dual interpretation of PSGLA where PSGLA is seen as a Forward Backward algorithm involving monotone operators. We obtained an inequality similar to~\eqref{eq:LangevinSGD}, but with the duality gap instead of the objective gap, and we proved that the duality gap is nonnegative. 



\subsection{Primal dual interpretation of the proximal gradient algorithm}
The approach of this paper can also be used to interpret the proximal gradient algorithm as a primal dual algorithm. 

Consider the minimization problem
\begin{equation}
\label{eq:min-suppl}
    \min_{x \in \sX} F(x) + G(x).
\end{equation}
To solve Problem~\eqref{eq:min-suppl}, the proximal gradient algorithm is written
\begin{equation}
\label{eq:pg}
    x^{k+1} = \prox_{\gamma G} \left(x^k - \gamma \nabla F(x^k)\right).
\end{equation}
The proximal gradient algorithm can be seen as a primal dual algorithm for Problem~\eqref{eq:min-suppl}~\cite{rockafellar1970convex}.
    
Indeed, a solution $x^\star$ to Problem~\eqref{eq:min-suppl} satisfies $0 \in \nabla F(x^\star) + \partial G(x^\star)$. Consider the dual variable $y^\star \in \partial G(x^\star)$ such that $0 = \nabla F(x^\star) + y^\star$. Since, $y^\star \in \partial G(x^\star)$, $0 \in - x^\star + \partial G^\ast(y^\star)$ using $\partial G^\ast = (\partial G)^{-1}.$ Finally,
\begin{equation}
\label{eq:inc-suppl}
\begin{bmatrix} 0 \\ 0\end{bmatrix} \in \begin{bmatrix} \nabla F(x^\star) & + y^\star \\ - x^\star& + \partial G^\ast(y^\star)\end{bmatrix}.
\end{equation}
Consider the set valued maps
$$ B : (x,y) \mapsto \begin{bmatrix} \nabla F(x)\\ 0\end{bmatrix}, $$ and
$$ A : (x,y) \mapsto \begin{bmatrix} &   y \\ - x& +\partial G^\ast(y)\end{bmatrix},$$ where we used vector notation. The maps $A$ and $B$ are maximal monotone operators (note that $B$ is the gradient of $(x,y) \mapsto F(x)$ and $A$ was used in Section~\ref{sec:alg}). Inclusion~\eqref{eq:inc-suppl} can be rewritten as 
\begin{equation}
\label{eq:zero-suppl}
    0 \in (A+B)(x^\star,y^\star).
\end{equation}
In order to solve~\eqref{eq:zero-suppl}, one can apply the Forward Backward algorithm
\begin{equation}
\label{eq:FB-suppl}
    P(x^{k+1/2} - x^k) = -\gamma B(x^k), \quad P(x^{k+1} - x^{k+1/2}) \in -\gamma A(x^{k+1}),
\end{equation}
for a well chosen $P \in \bR_{++}^{d \times d}$. As above, we take $P : (x,y) \mapsto x$. Although the matrix $P$ is only semi-definite positive, we showed in Lemma~\eqref{lem:resolvent-LV} that $x^{k+1} = \prox_{\gamma G}(x^{k+1/2})$. Hence, the primal dual Forward Backward algorithm~\eqref{eq:FB-suppl} is equivalent to the proximal gradient algorithm~\eqref{eq:pg}.

Moreover, the proof technique used for Theorem~\ref{th:evi} can be adapted to analyze the proximal gradient algorithm as a primal dual algorithm. The complexity result obtained for the proximal gradient algorithm with this approach is suboptimal. However, the derivation of this complexity result sheds some light on PSGLA.

First, using the (strong) convexity of $F$,
\begin{align*}
        \|x^{k+1/2} - x^\star\|^2 &= \|x^k - x^\star\|^2 + \gamma^2 \|\nabla F(x^k)\|^2 - 2\gamma\ps{\nabla F(x^k), x^k - x^\star}\\
         &\leq (1-\gamma \lambda_F)\|x^k - x^\star\|^2 + \gamma^2 \|\nabla F(x^k)\|^2 -2\gamma \left (F(x^k) - F(x^\star) \right)\\
         &\leq (1-\gamma \lambda_F)\|x^k - x^\star\|^2 + \gamma^2 \|\nabla F(x^k)\|^2 -2\gamma \left (F(x^{k+1/2}) - F(x^\star) \right)\\ &\phantom{=}-2\gamma \left (F(x^{k}) - F(x^{k+1/2}) \right).
    \end{align*}
Using the smoothness of $F$,
\begin{equation*}
    F(x^{k+1/2}) - F(x^k) \leq \ps{\nabla F(x^k),x^{k+1/2} - x^k} + \frac{L}{2}\|x^{k+1/2} - x^k\|^2 = -\gamma \left(1 - \frac{\gamma L}{2} \right)\|\nabla F(x^k)\|^2.
\end{equation*}
Therefore, 
\begin{equation}
    \label{eq:forward-suppl}
    \|x^{k+1/2} - x^\star\|^2 \leq  \|x^k - x^\star\|^2 - \gamma^2\left(1- \gamma L\right) \|\nabla F(x^k)\|^2 -2\gamma \left (F(x^{k+1/2}) - F(x^\star) \right).
\end{equation}
Inequality~\eqref{eq:forward-suppl} is analogue to Lemma~\ref{lem:durmus}.
Moreover, using Lemma~\ref{lem:resolvent-LV},
\begin{align*}
    \|x^{k+1} - x^\star\|^2 \leq& \|x^{k+1/2} - x^\star\|^2 \\&-2\gamma\left(G^\ast(y^{k+1}) - G^\ast(y^\star) - \ps{y^{k+1},x^\star}+ \ps{y^\star,x^{k+1/2}}\right)\\ &-\gamma(\lambda_{G^\ast} + \gamma)\|y^{k+1} - y^\star\|^2 + \gamma^2 \|y^\star\|^2,
\end{align*}
where $y^{k+1} = \prox_{G^\ast/\gamma}(x^k/\gamma)$.
Summing the two last inequality, and using $\gamma \leq 1/L$,
\begin{align}
\label{eq:evi-pdpg}
    \|x^{k+1} - x^\star\|^2
    \leq& (1-\gamma\lambda_F)\|x^{k} - x^\star\|^2  -\gamma(\lambda_{G^\ast} + \gamma) \|y^{k+1} - y^\star\|^2 \nonumber\\
    &-2\gamma \left(\Lag(x^{k+1/2},y^\star) - \Lag(x^\star,y^{k+1})\right) + \gamma^2 \|y^\star\|^2,
\end{align}
where $\Lag(x,y) = F(x) - G^\ast(y) + \ps{x,y}$ is the Lagrangian function and $\Lag(x^{k+1/2},y^\star) - \Lag(x^\star,y^{k+1})$ is the duality gap. The last inequality is similar to Theorem~\ref{th:evi}.
\begin{remark}
With slight modifications of the derivations above, one can get the better result
\begin{align*}
    \|x^{k+1} - x^\star\|^2
    \leq& (1-\gamma\lambda_F)\|x^{k} - x^\star\|^2  -2\gamma \left(\Lag(x^{k+1},y^\star) - \Lag(x^\star,y^{k+1})\right).
\end{align*}
However, the proof technique would not adapt to Langevin algorithm.
\end{remark}
\begin{remark}
Similarly to the result of Theorem~\ref{th:evi}, $x^\star,y^\star$ can be replaced by any $\bar{x},\bar{y}$. The proof technique does not use specific properties of $x^\star,y^\star$, as being primal dual optimal. Primal dual optimality of $x^\star,y^\star$ is only needed to prove that the duality gap is nonnegative.
\end{remark}

\section{Generalization to a stochastic three operators splitting}

In order to cover more applications, for instance involving \textit{several Lipschitz proximable terms} in the potential, we quickly generalize the results of Section~\ref{sec:res}. Our primal dual framework can be plugged to the results of~\cite{salim2019stochastic} instead of~\cite{durmus2018analysis}, leading to an extension of Section~\ref{sec:res}.

Consider the task of sampling from $\mu^\star \propto \exp(-V)$, where 
\begin{equation}
    V(x) = \bE(f(x,\xi)) + \bE(r(x,\xi)) + G(x).
\end{equation}
We assume the following. 
\begin{assumption}
\label{as:lip}
For every $x \in \sX$, $r(x,\xi)$ is integrable and $R(x) \eqdef \bE_\xi(r(x,\xi))$. Moreover, $r(\cdot,\xi) \in \Gamma_0(\sX)$ a.s. Finally, there exists $M \geq 0$ such that for every $x \in \sX$, $\bE_\xi(\|\partial^0 r(x,\xi)\|^2) \leq M^2$.
\end{assumption}
Assumption~\ref{as:lip} holds \textit{e.g.} if $r(x,\xi)$ is $\ell(\xi)$-Lipschitz continuous and $\bE_\xi(\ell^2(\xi)) < \infty$, since $\bE_\xi(\|\nabla^0 r(x,\xi)\|^2) \leq \bE_\xi(\ell^2(\xi))$. By replacing $F$ by $F+R$ in Section~\ref{sec:pd-opt}, 
Theorem~\ref{lem:DGbregman} still hold with the Lagrangian function
\begin{equation}
    \label{eq:Lagrangianspla}
    \Lag(\mu,y) \eqdef \cE_F(\mu)+\cE_R(\mu) + \cH(\mu) - \cE_{G^\ast}(\nu) + \bE \ps{x, y},
\end{equation}
where $x = T_{\mu^\star}^\mu(x^\star)$. Note that $\dom(R) = \sX$, hence $R$ is differentiable a.e.\ using~\cite[Theorem 25.5]{rockafellar1970convex}.
In order to sample from $\mu^\star$, the stochastic proximal Langevin algorithm (SPLA)~\cite{salim2019stochastic} is written
\begin{equation}
    \label{eq:spla}
    x^{k+1} = \prox_{\gamma G}\left(\prox_{\gamma r(\cdot,\xi)}\left(x^k - \gamma \nabla_x f(x^k,\xi^{k+1}) + \sqrt{2\gamma} W^{k+1}\right)\right).
\end{equation}
SPLA recovers PSGLA by taking $R \equiv 0$. Moreover, SPLA is analyzed in~\cite{salim2019stochastic} only in the case where $G$ also satisfies Assumption~\ref{as:lip} (and hence $\dom(G) = \sX$) which is stronger than assuming~\ref{as:sobolev}. In this section, we denote 
\begin{equation}
    \label{eq:xk1/2spla}
    x^{k+1/2} \eqdef \prox_{\gamma r(\cdot,\xi)}\left(x^k - \gamma \nabla_x f(x^k,\xi^{k+1}) + \sqrt{2\gamma} W^{k+1}\right),
\end{equation}
and $\mu^{k+1/2}$ its distribution.
We shall prove the following extension of Theorem~\ref{th:evi} assuming that $G$ satisfies the Assumption~\ref{as:sobolev}. This extension of Theorem~\ref{th:evi} leads to an extension of Corollary~\ref{cor:rate} providing complexity results for SPLA similar to PSGLA.

\begin{theorem}
\label{th:evi2}
Let Assumptions~\ref{as:smooth},~\ref{as:int}, ~\ref{as:intgrad},~\ref{as:bound-var} and~\ref{as:lip} hold true. If $F$ is $\lambda_F$-strongly convex and $G$ is $1/\lambda_{G^\ast}$-smooth, then for every $\gamma \leq 1/L$,
\begin{align}
\label{eq:thspla}
    W^2(\mu^{k+1},\mu^\star)
    \leq& (1-\gamma\lambda_F)W^2(\mu^{k},\mu^\star)  -\gamma(\lambda_{G^\ast} + \gamma) W^2(\nu^{k+1},\nu^\star) \nonumber\\
    &-2\gamma \left(\Lag(\mu^{k+1/2},y^\star) - \Lag(\mu^\star,y_\star^{k+1})\right) + \gamma^2 C,
\end{align}
where $C \eqdef \int_{\interior(D)} \|\nabla G(x)\|^2 d\mu^{\star}(x) + 2 (L d+\sigma_F^2 + M^2)$ and
$
y_{\star}^{k+1} \eqdef \prox_{G^\ast/\gamma}(x^{k+1/2}_{\star}/\gamma) \sim \nu^{k+1},
$ where $x^{k+1/2}_{\star} \eqdef T_{\mu^\star}^{\mu^{k+1/2}}(x^\star)$.
\end{theorem}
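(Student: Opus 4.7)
The plan is to mirror the argument of Theorem~\ref{th:evi}, splitting the SPLA iteration at the intermediate point $x^{k+1/2}$ defined in~\eqref{eq:xk1/2spla} and treating the two halves separately. The ``backward'' half, namely the step from $x^{k+1/2}$ to $x^{k+1}$ through $\prox_{\gamma G}$, is structurally identical to the corresponding step in PSGLA, so Lemma~\ref{lem:resolvent-LV} applies verbatim. Instantiating it with $x = x_\star^{k+1/2}$, $x' = x^{k+1}$ and $y' = y_\star^{k+1}$, and then taking $\pi^\star$-expectations, I would obtain
\begin{align*}
W^2(\mu^{k+1},\mu^\star) \leq{}& W^2(\mu^{k+1/2},\mu^\star) - \gamma(\lambda_{G^\ast}+\gamma)W^2(\nu^{k+1},\nu^\star) + \gamma^2\int\|y\|^2\,d\nu^\star(y) \\
&- 2\gamma\bigl(\cE_{G^\ast}(\nu^{k+1}) - \cE_{G^\ast}(\nu^\star) - \bE\ps{y_\star^{k+1},x^\star} + \bE\ps{y^\star, x_\star^{k+1/2}}\bigr),
\end{align*}
exactly as in the proof of Theorem~\ref{th:evi}. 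This part contributes the term $\int_{\interior(D)}\|\nabla G\|^2\,d\mu^\star$ to the constant $C$ and the $\Lag(\mu^{k+1/2},y^\star)$ piece of the duality gap, without interacting with the new $M^2$ term.

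The ``forward'' half, namely $x^k \mapsto x^{k+1/2}$, is where the new ingredient enters, and the goal is to establish an SPLA analogue of Lemma~\ref{lem:durmus}, namely
\begin{align*}
W^2(\mu^{k+1/2},\mu^\star) \leq{}& (1-\gamma\lambda_F)W^2(\mu^k,\mu^\star) + 2\gamma^2(Ld+\sigma_F^2+M^2) \\
&- 2\gamma\bigl(\cE_F(\mu^{k+1/2}) + \cE_R(\mu^{k+1/2}) + \cH(\mu^{k+1/2}) - \cE_F(\mu^\star) - \cE_R(\mu^\star) - \cH(\mu^\star)\bigr).
\end{align*}
I would derive this following~\cite{salim2019stochastic} by decomposing $x^{k+1/2}$ as the stochastic gradient Langevin iterate $z^{k+1} \eqdef x^k - \gamma\nabla_x f(x^k,\xi^{k+1}) + \sqrt{2\gamma}W^{k+1}$, which is handled by Lemma~\ref{lem:durmus} applied to $F$ alone, followed by the stochastic proximal step $x^{k+1/2} = \prox_{\gamma r(\cdot,\xi^{k+1})}(z^{k+1})$. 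For the latter I would use $z^{k+1} - x^{k+1/2} \in \gamma\,\partial r(x^{k+1/2},\xi^{k+1})$ together with the subgradient inequality for $r(\cdot,\xi^{k+1})$, and then take expectation in $\xi^{k+1}$ to convert the pointwise evaluation $r(x^{k+1/2},\xi^{k+1})$ into $\cE_R(\mu^{k+1/2})$ using $\bE_\xi[r(x,\xi)] = R(x)$. The cross term between this proximal correction and the Gaussian noise is controlled via the bound $\bE_\xi\|\partial^0 r(\cdot,\xi)\|^2 \leq M^2$ of Assumption~\ref{as:lip}, which is precisely what produces the extra $2\gamma^2 M^2$ in the constant.

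Summing the two half-step inequalities cancels $W^2(\mu^{k+1/2},\mu^\star)$, and regrouping the entropy, potential, and $G^\ast$ terms using the definition~\eqref{eq:Lagrangianspla} of $\Lag$ yields exactly~\eqref{eq:thspla} with the advertised constant $C$. The main obstacle is the forward-step lemma: the proximal operator on $r(\cdot,\xi)$ is $\xi$-dependent, so to obtain a clean bound expressed through $\cE_R(\mu^{k+1/2})$ rather than a pointwise $r$-value one has to carefully exploit unbiasedness, handle the fact that $x^{k+1/2}$ itself depends on $\xi^{k+1}$ (so $\mu^{k+1/2}$ is the law of that coupled variable), and absorb the resulting residual $\bE_\xi[\,r(x^{k+1/2},\xi^{k+1}) - r(x^\star,\xi^{k+1}) - R(x^{k+1/2}) + R(x^\star)\,]$ using the uniform second-moment bound $M^2$. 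Once this lemma is in hand, the remainder of the argument is a direct transcription of the proof of Theorem~\ref{th:evi}.
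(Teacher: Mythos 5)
Your proposal follows essentially the same route as the paper: the backward half is handled by applying Lemma~\ref{lem:resolvent-LV} at $x_\star^{k+1/2}$ and taking expectations, the forward half is handled by the SPLA analogue of Lemma~\ref{lem:durmus} (stated in the appendix as Lemma~\ref{lem:salim}), and the two inequalities are summed. The only difference is that the paper imports that forward-half inequality directly from~\cite[Theorem 1]{salim2019stochastic} rather than re-deriving it, whereas you sketch its derivation (correctly identifying the coupling between $x^{k+1/2}$ and $\xi^{k+1}$ as the delicate point); this is a matter of presentation, not of substance.
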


Before proving Theorem~\ref{th:evi2}, we recall the following consequence of~\cite[Theorem 1]{salim2019stochastic}, which generalizes Lemma~\ref{lem:durmus}.
\begin{lemma}[\cite{salim2019stochastic}]
\label{lem:salim}
Let Assumptions~\ref{as:smooth},~\ref{as:int},~\ref{as:bound-var} and~\ref{as:lip} hold true. Then, if $F$ is $\lambda_F$-strongly convex, for every $\gamma \leq 1/L$,
\begin{align}
W^2(\mu^{k+1/2},\mu^\star) \leq& (1-\gamma\lambda_F)W^2(\mu^k,\mu^\star) + 2\gamma^2 (L d +\sigma_F^2 +M^2)\nonumber\\
&- 2\gamma\left(\cE_F(\mu^{k+1/2}) + \cE_R(\mu^{k+1/2}) + \cH(\mu^{k+1/2}) - \cE_F(\mu^\star) - \cE_R(\mu^{\star}) - \cH(\mu^\star)\right).
\end{align}
\end{lemma}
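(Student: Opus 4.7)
\textbf{Proof plan for Theorem~\ref{th:evi2}.} The strategy mirrors the proof of Theorem~\ref{th:evi}: combine a forward-type bound for the intermediate iterate $\mu^{k+1/2}$ with a backward-type bound coming from Lemma~\ref{lem:resolvent-LV}, and identify the resulting ``middle'' terms with the duality gap of the augmented Lagrangian~\eqref{eq:Lagrangianspla}. The only genuinely new ingredient is that the forward step now contains the extra stochastic proximal step on $r(\cdot,\xi)$, which is precisely handled by Lemma~\ref{lem:salim} instead of Lemma~\ref{lem:durmus}; the backward step is the very same primal dual proximal step on $G$ as in PSGLA, so Lemma~\ref{lem:resolvent-LV} applies verbatim.

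\textbf{Step 1: Forward bound.} Since $x^{k+1/2}$ is defined in~\eqref{eq:xk1/2spla} exactly as the iterate analysed in~\cite{salim2019stochastic}, Lemma~\ref{lem:salim} yields
\begin{align*}
W^2(\mu^{k+1/2},\mu^\star) \leq{}& (1-\gamma\lambda_F)W^2(\mu^k,\mu^\star) + 2\gamma^2 (Ld +\sigma_F^2 + M^2) \\
&- 2\gamma\bigl(\cE_F(\mu^{k+1/2}) + \cE_R(\mu^{k+1/2}) + \cH(\mu^{k+1/2}) - \cE_F(\mu^\star) - \cE_R(\mu^\star) - \cH(\mu^\star)\bigr).
\end{align*}

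\textbf{Step 2: Backward bound.} The outer proximal step $x^{k+1} = \prox_{\gamma G}(x^{k+1/2})$ is structurally identical to the one appearing in PSGLA. Applying Lemma~\ref{lem:resolvent-LV} with $x$ replaced by the optimal pushforward $x^{k+1/2}_\star = T_{\mu^\star}^{\mu^{k+1/2}}(x^\star)$, so that $x' = \prox_{\gamma G}(x^{k+1/2}_\star) \sim \mu^{k+1}$ and $y' = y_\star^{k+1} \sim \nu^{k+1}$, and taking expectation, we obtain
\begin{align*}
W^2(\mu^{k+1},\mu^\star) \leq{}& W^2(\mu^{k+1/2},\mu^\star) - \gamma(\lambda_{G^\ast}+\gamma)W^2(\nu^{k+1},\nu^\star) + \gamma^2 \int\|y\|^2 d\nu^\star(y) \\
&- 2\gamma\bigl(\cE_{G^\ast}(\nu^{k+1}) - \cE_{G^\ast}(\nu^\star) - \bE\ps{y_\star^{k+1},x^\star} + \bE\ps{y^\star, x^{k+1/2}_\star}\bigr),
\end{align*}
where the use of $W^2(\mu^{k+1},\mu^\star)\le \bE\|\prox_{\gamma G}(x^{k+1/2}_\star)-x^\star\|^2$ and $W^2(\nu^{k+1},\nu^\star)\le \bE\|y_\star^{k+1}-y^\star\|^2$ comes from the fact that these pushforwards yield admissible couplings. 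Crucially, $\nu^\star = Y^\star \# \mu^\star$ with $Y^\star = \nabla G$, so $\int\|y\|^2 d\nu^\star(y) = \int_{\interior(D)}\|\nabla G(x)\|^2 d\mu^\star(x)$, which is finite by Assumption~\ref{as:intgrad}.

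\textbf{Step 3: Assembly and Lagrangian identification.} Summing the two inequalities and collecting the $\gamma^2$ terms gives exactly the constant $C = \int_{\interior(D)}\|\nabla G\|^2 d\mu^\star + 2(Ld+\sigma_F^2+M^2)$. It remains to verify that the residual linear-in-$\gamma$ terms equal $-2\gamma\bigl(\Lag(\mu^{k+1/2},y^\star) - \Lag(\mu^\star,y_\star^{k+1})\bigr)$. Using the definition~\eqref{eq:Lagrangianspla}, one directly checks
\begin{align*}
\Lag(\mu^{k+1/2},y^\star) - \Lag(\mu^\star,y_\star^{k+1}) ={}& \cE_F(\mu^{k+1/2})+\cE_R(\mu^{k+1/2})+\cH(\mu^{k+1/2}) - \cE_F(\mu^\star)-\cE_R(\mu^\star)-\cH(\mu^\star) \\
&+ \cE_{G^\ast}(\nu^{k+1}) - \cE_{G^\ast}(\nu^\star) + \bE\ps{x_\star^{k+1/2},y^\star} - \bE\ps{x^\star,y_\star^{k+1}},
\end{align*}
which matches the assembled cross terms. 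This yields~\eqref{eq:thspla}.

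\textbf{Main obstacle.} There is no real obstacle beyond bookkeeping: Lemma~\ref{lem:salim} absorbs the entire stochastic Lipschitz term $R$ into the forward bound (at the price of the extra $M^2$), and Lemma~\ref{lem:resolvent-LV} does not care whether $\mu^{k+1/2}$ comes from one or two proximal/gradient steps upstream. The only point worth double-checking is that the Lagrangian in the SPLA setting is the correct augmentation of the PSGLA one — i.e.\ that adding $\cE_R$ on both sides of $\Lag$ preserves the strong duality of Theorem~\ref{lem:DGbregman}, which follows by repeating the proof of that theorem with $F$ replaced by $F+R$ (note that $R$ is everywhere finite and convex, so $\cE_R$ is geodesically convex and $R$ is differentiable $\mu^\star$-a.e., leaving the first-order optimality~\eqref{eq:saddleW2} intact up to adding $\nabla R$ to $\nabla F$).
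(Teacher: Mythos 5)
The statement in question is Lemma~\ref{lem:salim} itself --- the forward, Langevin-plus-stochastic-proximal inequality bounding $W^2(\mu^{k+1/2},\mu^\star)$ in terms of $W^2(\mu^k,\mu^\star)$ and the functional gap $\cE_F+\cE_R+\cH$. Your Step~1 simply \emph{invokes} this lemma as a black box (``Lemma~\ref{lem:salim} yields\ldots''), and the rest of your argument (Steps~2 and~3) is a proof plan for Theorem~\ref{th:evi2}, which \emph{consumes} the lemma rather than establishing it. With respect to the target statement the proposal is therefore circular: no argument is given for why the extra proximal step on $r(\cdot,\xi)$ in~\eqref{eq:xk1/2spla} costs only an additive $2\gamma^2 M^2$ while preserving the contraction factor $(1-\gamma\lambda_F)$ and the $-2\gamma(\cE_F+\cE_R+\cH)(\cdot)$ descent term. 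That is precisely the content that needs proving.

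For the record, the paper's own proof of Lemma~\ref{lem:salim} is a one-line specialization: apply Theorem~1 of~\cite{salim2019stochastic} with $G_2\equiv\ldots\equiv G_n\equiv 0$ (so that only one stochastic Lipschitz proximable term remains) and identify the KL term of~\cite[Equation 3]{salim2019stochastic} with $\cE_F(\mu^{k+1/2})+\cE_R(\mu^{k+1/2})+\cH(\mu^{k+1/2})-\cE_F(\mu^\star)-\cE_R(\mu^\star)-\cH(\mu^\star)$ via~\eqref{eq:KL-F}. If you intended a self-contained derivation instead, you would need to redo the one-step analysis of~\cite[Lemma 30]{durmus2018analysis}: couple $x^k$ optimally with $x^\star$, bound the gradient-plus-noise step as in the PSGLA case, and then handle the nonexpansive stochastic proximal step on $r(\cdot,\xi)$ using the firm nonexpansiveness of $\prox_{\gamma r(\cdot,\xi)}$ together with the second-moment bound $\bE_\xi(\|\partial^0 r(x,\xi)\|^2)\le M^2$ from Assumption~\ref{as:lip}; this is where the extra $2\gamma^2 M^2$ arises. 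None of this appears in the proposal. (Your Steps~2 and~3 do correctly reproduce the paper's proof of Theorem~\ref{th:evi2}, but that is a different statement.)
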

\begin{proof}
Apply~\cite[Theorem 1]{salim2019stochastic} by taking $G_2 \equiv \ldots \equiv G_n \equiv 0$ and noting that the KL term in~\cite[Equation 3]{salim2019stochastic} is equal to $\left(\cE_F(\mu^{k+1/2}) + \cE_R(\mu^{k+1/2}) + \cH(\mu^{k+1/2}) - \cE_F(\mu^\star) - \cE_R(\mu^{\star}) - \cH(\mu^\star)\right)$ using our notations, see Equation~\eqref{eq:KL-F}.
\end{proof}

We now prove Theorem~\ref{th:evi2}, similarly to Theorem~\ref{th:evi}.

The main tool for the proof is Lemma~\ref{lem:resolvent-LV}. Replace $x$ by $x^{k+1/2}_{\star} \sim \mu^{k+1/2}$ in~\eqref{eq:funda-lv}. Then $y' = y^{k+1}_{\star} \sim \nu^{k+1}$ and $\prox_{\gamma G}(x^{k+1/2}_{\star}) \sim \mu^{k+1}$. Therefore,
\begin{equation*}
W^2(\mu^{k+1},\mu^\star) \leq \bE(\|\prox_{\gamma G}(x^{k+1/2}_{\star}) - x^\star\|^2), \quad
W^2(\nu^{k+1},\nu^\star) \leq \bE(\|y^{k+1}_{\star}-y^\star\|^2).
\end{equation*}
 Consequently, taking expectation in~\eqref{eq:funda-lv} we get
\begin{align*}
    W^2(\mu^{k+1},\mu^\star) \leq& W^2(\mu^{k+1/2},\mu^\star) -\gamma(\lambda_{G^\ast} + \gamma) W^2(\nu^{k+1},\nu^\star) + \gamma^2 \int \|y\|^2 d\nu^\star(y)\\ &-2\gamma\left(\cE_{G^\ast}(\nu^{k+1}) - \cE_G^\ast(\nu^\star) - \bE\ps{y^{k+1}_{\star},x^\star}+ \bE\ps{y^\star,x^{k+1/2}_{\star}}\right).
\end{align*}
Combining with Lemma~\ref{lem:salim}, we get the result.

\end{document}